\newtheorem{assumption}{Assumption}
\newtheorem{theorem}{Theorem}
\newtheorem{remark}{Remark}
\newtheorem{lemma}{Lemma}
\newcommand{\cmark}{\ding{51}}%
\newcommand{\xmark}{\ding{55}}%
\begin{document}

\title{Compressing Features for Learning \\ with Noisy Labels}

\author{Yingyi Chen,
        Shell Xu Hu,
        Xi Shen,
        Chunrong Ai,
        and~Johan A.K.~Suykens
\thanks{This work is jointly supported by ERC Advanced Grant E-DUALITY (787960), KU Leuven Grant C14/18/068, Grant FWO GOA4917N, Grant from the Flemish Government (AI Research Program), EU H2020 ICT-48 Network TAILOR, Leuven.AI Institute.
\textit{(Corresponding authors: Yingyi Chen; Xi Shen.)}}
\thanks{Yingyi Chen is with STADIUS, ESAT, KU Leuven, 3001 Leuven, Belgium
(e-mail: yingyi.chen@esat.kuleuven.be).}
\thanks{Shell Xu Hu is with Samsung AI Center, Cambridge, England 
(e-mail: shell.hu@samsung.com).}
\thanks{Xi Shen is with Tencent AI lab, Shenzhen, China 
(e-mail: xi.shen@enpc.fr).}
\thanks{Chunrong Ai is with School of Management and Economics, Chinese University of Hong Kong, Shenzhen, China 
(e-mail: chunrongai@cuhk.edu.cn).}
\thanks{Johan A.K.~Suykens is with STADIUS, ESAT, KU Leuven, 3001 Leuven, Belgium 
(e-mail: johan.suykens@esat.kuleuven.be)}
}

\markboth{}%
{Shell \MakeLowercase{\textit{et al.}}: Bare Demo of IEEEtran.cls for IEEE Journals}

\maketitle

\begin{abstract}
    Supervised learning can be viewed as distilling relevant information from input data into feature representations.
    This process becomes difficult when supervision is noisy as the distilled information might not be relevant.
    In fact, recent research~\cite{zhang2017understanding} shows that networks can easily overfit all labels including those that are corrupted, and hence can hardly generalize to clean datasets.
    In this paper, we focus on the problem of learning with noisy labels and introduce compression inductive bias to network architectures to alleviate this over-fitting problem.
    More precisely, we revisit one classical  regularization named Dropout~\cite{srivastava2014dropout} 
    and its variant Nested Dropout~\cite{rippel2014learning}.
    Dropout can serve as a compression constraint for its feature dropping mechanism,
    while Nested Dropout further learns ordered feature representations w.r.t.~feature importance.
    Moreover, the trained models with compression regularization are further combined with Co-teaching~\cite{han2018co} for performance boost.

    Theoretically, we conduct bias-variance decomposition of the objective function under compression regularization. 
    We analyze it for both single model and Co-teaching.
    This decomposition provides three insights:
    \textit{(i)} it shows that over-fitting is indeed an issue in learning with noisy labels; 
    \textit{(ii)} through an information bottleneck formulation, it explains why the proposed feature compression helps in combating label noise;
    \textit{(iii)} it gives explanations on the performance boost brought by incorporating compression regularization into Co-teaching.
    Experiments show that our simple approach can have comparable or even better performance than the state-of-the-art methods on benchmarks with real-world label noise including Clothing1M \cite{xiao2015learning} and ANIMAL-10N \cite{song2019selfie}.
    Our implementation is available at \href{https://yingyichen-cyy.github.io/CompressFeatNoisyLabels/}{https://yingyichen-cyy.github.io/CompressFeatNoisyLabels/}.
\end{abstract}

\begin{IEEEkeywords}
Label noise, compression, bias-variance decomposition, information sorting, deep learning.
\end{IEEEkeywords}

\section{Introduction} \label{sec::introduction}
\IEEEPARstart{T}{he} success of deep learning depends on the availability of massive and carefully labeled data. 
However, there is often no guarantee that all annotations are perfect, especially when the amount of data is huge and annotations are required to be fine such as optical flow and segmentation.
In contrast, with the rapid development of the Internet, there are multiple ways to have inexpensive and convenient access to large but defective data, including querying commercial search engines \cite{li2017webvision}, downloading images from social media \cite{mahajan2018exploring}, and various web crawling strategies \cite{olston2010web}.
Correspondingly, persistent efforts have been paid in literature to learn with imperfect data, among which learning with noisy labels has always been attached great significance.

The problem of learning with noisy labels dates back to \cite{angluin1988learning, quinlan1986induction}.
The mainstream methods include 
\textit{(i)}
training on reweighted samples \cite{han2018co, jiang2018mentornet, malach2017decoupling, yu2019does, wei2020combating} where samples possibly clean are assigned larger weights than those possibly corrupted;
\textit{(ii)} 
employing robust loss functions to resist noise \cite{patrini2017making, natarajan2013learning, reed2014training, goldberger2017training};
\textit{(iii)}
conducting label correction \cite{tanaka2018joint, yi2019probabilistic, zhang2021learning} where original labels are often substituted by the possible clean predictions;
\textit{(iv)}
semi-supervised learning methods
\cite{ding2018semi,kong2019recycling,li2020dividemix}
where samples are first identified as clean or corrupted, and then networks are trained in a semi-supervised manner with only the clean labels used.
Moreover, label noise itself also plays an important role in understanding the \textit{generalization puzzle} of deep learning.
Empirical experiments in \cite{zhang2017understanding} show that deep neural networks (DNNs), such as AlexNet \cite{krizhevsky2012imagenet}, can achieve almost zero training errors on randomly labelled datasets.
This analysis demonstrates that the capacities of DNNs are often high enough to memorize the entire noisy training information.
Since over-fitting is mainly due to the model capacities, an alternative way to address the problem of training with noisy labels is to introduce explicit compression inductive bias to the model architecture, which is the main focus of this work.

In this paper, we propose to combat this over-fitting problem by introducing compression inductive bias to networks.
More precisely, rather than relying on the prediction of deterministic DNNs, we introduce feature compression
to the hidden features in networks via \textit{Dropout} \cite{srivastava2014dropout} and its variant \textit{Nested Dropout} \cite{rippel2014learning}.
Dropout can be served as a compression constraint for its feature dropping mechanism, while Nested Dropout further learns ordered feature representations w.r.t. feature importance.
Leveraging Nested Dropout, we can not only constrain the model capacity, but also filter out the irrelevance while preserves the relevance w.r.t.~the learning task.
Moreover, compared to Dropout, the information sorting property of Nested Dropout is particularly useful for conducting signal-to-noise separation in the feature level.
Note that we may also consider other compression strategies such as principal component analysis (PCA) and kernel principal component analysis (kernel PCA) \cite{suykens2002least}, but Dropout/Nested Dropout is a plug-and-play component to networks, thus bringing much convenience to the implementation.

In addition to Dropout/Nested Dropout's bringing feature-level compression to networks, we find that they are suitable for incorporating into Co-teaching \cite{han2018co} which is a strong method for learning with noisy labels, for performance boost.
Specifically, Co-teaching trains two networks simultaneously where networks update themselves based on the small-loss mini-batch samples selected by their peer.
Intuitively, this sample selection mechanism discards samples with possibly wrong labels, and preserves those that are possibly clean.
We will show in this paper that the sample selection during the cross-update process together with compression techniques
will further prevent networks from over-fitting the noisy labels.
On account that good performance of Co-teaching requires the two base networks to be reliable enough, we propose our two-stage method:

\begin{itemize}
    \item Train two \textit{Dropout} / \textit{Nested Dropout} networks separately until convergence;
    \item Fine-tune these two networks with \textit{Co-teaching}.
\end{itemize}
Note that Dropout/Nested Dropout is maintained in the second stage for fine-tuning.
The efficacy of our two-stage compression approach is validated on benchmark real-world datasets by achieving comparable or even better performance than the state-of-the-art approaches.
For example, on Clothing1M~\cite{xiao2015learning}, our method obtains 75.0\% in accuracy, which achieves comparable performance to DivideMix~\cite{li2020dividemix} and ELR+~\cite{liu2020early}. 
On ANIMAL-10N~\cite{song2019selfie}, 
we achieve 84.5\% in accuracy while the state-of-the-art method PLC~\cite{zhang2021learning} is 83.4\%.

Beyond the empirical contributions, we provide theoretical explanations on why compression can combat label noise.
In particular, we conduct bias-variance decomposition of the objective function where Dropout/Nested Dropout is formulated into latent variable model.
This decomposition provides three insights:
\textit{(i)} it shows that over-fitting is indeed an issue in learning with noisy labels.
The bias term determines how close the model fits noisy labels, while the variance term promotes a consensus among individual models in latent variable model.
Deterministic DNNs have zero variance term and thus focus on minimize the bias term during training, leading to over-fitting on noisy labels.
\textit{(ii)} Through an information bottleneck formulation, it explains why the proposed feature compression helps in combating label noise.
Dropout/Nested Dropout can serve as compression constraints since they can be formulated as optimizing an information bottleneck.
These compression constraints bring non-zero variance term and thus reduce the impact of the bias term.
\textit{(iii)} It explains the performance boost brought by incorporating compression regularization into Co-teaching.
The cross-update strategy of Co-teaching together with the compression constraints bring larger variance term to further diminish the influence of the bias term, leading to even less over-fitting on the noisy labels.

This paper is based on our previous work \cite{chen2021boosting} which mainly focuses on the empirical results.
We enrich it with theoretical understanding of our method, the learning with noisy labels problem itself, and more detailed numerical assessments.
This paper is structured as follows: 
Section \ref{sec::relatedWork} summarizes the related works in learning with noisy labels.
Section \ref{sec::method} presents our algorithm.
Section \ref{sec::theory} provides a theoretical understanding of our method.
Section \ref{sec::experiments} shows illustrative toy example and experiments on benchmark real-world datasets. 
Finally, we conclude this paper in Section \ref{sec::conclusion}. 
Implementation is available at \href{https://yingyichen-cyy.github.io/CompressFeatNoisyLabels/}{https://yingyichen-cyy.github.io/CompressFeatNoisyLabels/}.

\section{Related Works} \label{sec::relatedWork}
In this section, we briefly review the existing works related to learning with label noise.
Extensive literature reviews can be found in \cite{song2020learning,han2020survey,cordeiro2020survey}.

\paragraph{Over-fitting prevention}
The idea of preventing networks from over-fitting for better generalization has been considered in \cite{arpit2017closer,ma2018dimensionality}.
In particular, \cite{arpit2017closer} proposes that appropriately tuned explicit regularization prevents DNNs from over-fitting noisy datasets while maintains generalization on
clean data, and
\cite{ma2018dimensionality} proposes to understand the generalization of DNNs by investigating the dimensionality of the deep representation subspace of training samples.
C2D~\cite{zheltonozhskii2022contrast} uses self-supervised pre-training to learn more meaningful information before over-fitting to noise.
ELR/ELR+~\cite{liu2020early} proposes an early-learning regularization to resist over-fitting, while AugDesc~\cite{nishi2021augmentation} achieves this by employing different augmentation strategies.
Although starting from the point of preventing networks from over-fitting, our method is different from their works mainly in that 
\textit{(i)} we theoretically verify that over-fitting is indeed an issue by conducting bias-variance decomposition while \cite{arpit2017closer, ma2018dimensionality, zheltonozhskii2022contrast,nishi2021augmentation} are more from an experimental perspective.
\textit{(ii)} we inject extrinsic compression to filter out noisy information, while \cite{ma2018dimensionality} identifies network's intrinsic compression point and adapts the corresponding loss.

\paragraph{Samples reweighting}
Samples reweighting scheme learns to assign small weights to those samples supposed to be corrupted.
ActiveBias~\cite{chang2017active} reweights samples based on the variance of prediction probabilities and the closeness between the prediction probabilities and the decision threshold.
MentorNet \cite{jiang2018mentornet} trains its student network based on the clean samples selected by its teacher network.
Co-teaching \cite{han2018co} cross-updates its two base models 
based on the samll-loss samples selected by their peers.
Decoulping \cite{malach2017decoupling} updates the networks based on samples where the predictions of the two predictors are different, that is, the ``disagreement" strategy.
As for Co-teaching+ \cite{yu2019does}, it combines Co-teaching with the ``disagreement" strategy to further improve the performance.
Different from above where networks are based on ``disagreement", JoCoR \cite{wei2020combating} trains two networks as a whole by a joint loss following the ``agreement" strategy, and select the small-loss examples to update themselves.
This ``agreement" strategy shows improvement over the previous methods.
Note that we still base our method on Co-teaching since it is easier and also effective for both implementation and analysis.
\cite{pleiss2020identifying} proposes a statistic, namely AUM, which differentiates clean samples from mislabeled samples by exploiting their training dynamics.
The mislabeled ones are discarded during training. 
\cite{pleiss2020identifying} is categorized here since discarding samples is equivalent to assigning zero weights to them.

\begin{figure}[t]
	\begin{minipage}[t]{0.45\textwidth}  
		\centering  
		\includegraphics[width=\textwidth]{./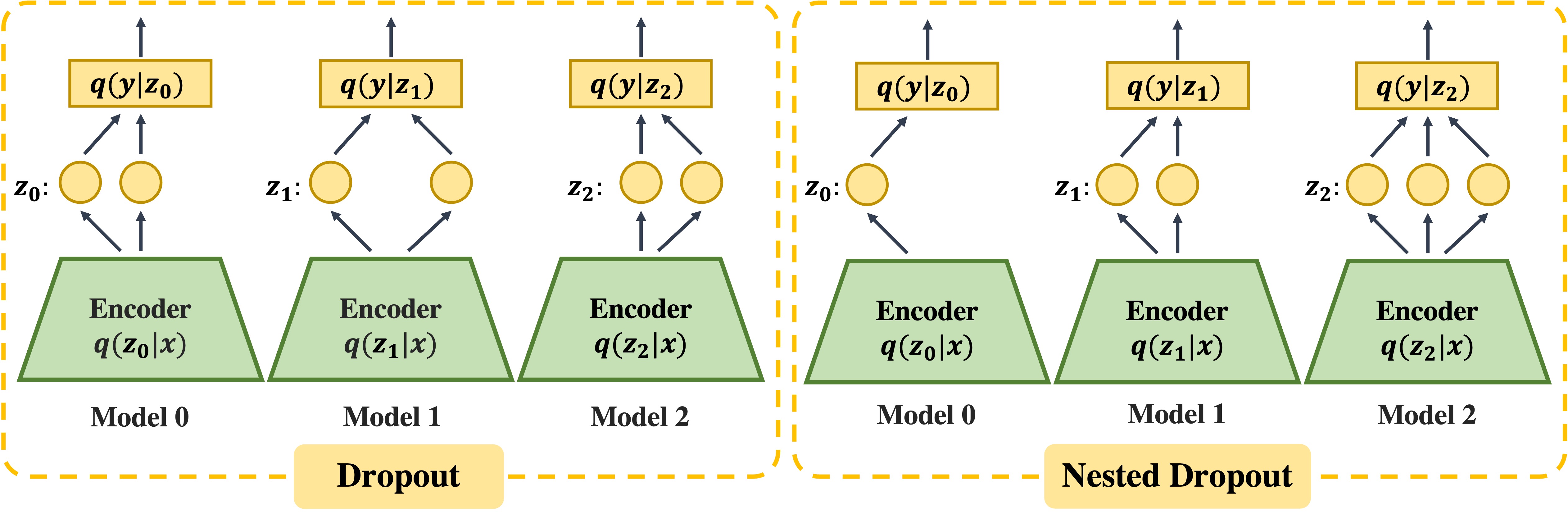}
	\end{minipage}  
	\centering  
	\caption{Both Dropout and Nested Dropout are ways to induct ensemble of models, not merely regularizations. 
	To be specific, network with them applied can be regarded as a latent variable model $q(y|x) := \int_{\mathcal{Z}} q(y|z) q(z|x) \, dz$, where $q(z|x)$ is the encoder, and $q(y|z)$ is the decoder as in \eqref{eq::lvm}.
	``Models 0-2" are models corresponding to different trials of $z$.
	For instance, ``Model 1" of Dropout is the model corresponding to trial $z_1$ where the second neuron is masked out. 
	In this case, the entire Dropout model can be viewed as an ensemble of different trial models since the integral is over all possible $z\in\mathcal{Z}$. 
	Similar explanations also apply to Nested Dropout.}
	\label{fig::compress}
	\vspace{-5mm}
\end{figure}

\paragraph{Robust loss function}
Robust losses have been applied to achieve noise-tolerant classifications including ramp loss \cite{brooks2011support}, unhinged loss \cite{van2015learning}, mean absolute error \cite{suykens2002least, ghosh2017robust, suykens1999least}. 
However, the fact that DNNs can learn arbitrary labels may dampen the effectiveness of these losses in the context of deep learning.
In deep learning, losses are corrected to be robust to noisy samples, or more exactly, to eliminate the influence of noisy samples.
Based on the estimated noise transition matrix, Forward and Backward \cite{patrini2017making} modify the loss function and build an end-to-end framework.
HOC~\cite{zhu2021clusterability} recently proposes to work on clusterable feature representations so as to efficiently estimate noise transition matrix, and further conduct better loss correction.
Other loss correction strategies include \cite{reed2014training,goldberger2017training}.
Different from these methods, we use the cross-entropy loss albeit adapt it for latent variable models.

\paragraph{Label correction}
JO \cite{tanaka2018joint} is a joint optimization framework where network parameters and class labels are optimized alternatively in training.
Inspired but quite unlike \cite{tanaka2018joint}, rather than correcting labels by using the running average of network predictions, PENCIL \cite{yi2019probabilistic} corrects labels via an updating label distribution in an end-to-end manner. 
Moreover, those noisy labels are only utilized for initializing the label distributions, and 
the network loss function is computed using the label distributions.
SELFIE~\cite{song2019selfie} selects refurbishable samples which are of low uncertainty and can be corrected with a high precision, then replaces their labels based on past model outputs. These corrected samples together with other low-loss instances are later used to update the network.
Another state-of-the-art method named PLC \cite{zhang2021learning} focuses more on feature-dependent label noise where labels are progressively corrected based on the confidence of the noisy classifier.
Notably, we keep using all the labels including those noisy ones instead of conducting label correction which is more complicated.

\paragraph{Semi-supervised methods}
In \cite{ding2018semi}, a two-stage method is proposed where samples are identified as clean or corrupted in the first stage, and then networks are trained in a semi-supervised manner with only the clean labels utilized in stage two.
\cite{kong2019recycling} also conducts a similar two-stage method with Renyi entropy regularization used in stage two.
DivideMix \cite{li2020dividemix} is one of the state-of-the-art methods achieving high accuracy on real noisy datasets.
Specifically, it dynamically divides training data into a labeled clean set and an unlabelled corrupted set, and then trains models on both sets in a semi-supervised manner with improved MixMatch \cite{berthelot2019mixmatch} strategy.
It can be seen that these methods mainly differ in adopting different criteria for semi-supervised learning step after dividing the training set into clean and corrupted subsets.

\section{Method} \label{sec::method}
In this section, we present our approach for learning with noisy labels. 
We start with recalling compression techniques including \textit{Dropout}~\cite{srivastava2014dropout} and its structured variant named \textit{Nested Dropout}~\cite{rippel2014learning} in Section \ref{subsec::nested}.
Next, we combine them with one commonly accepted approach named \textit{Co-teaching}~\cite{han2018co} 
(Section \ref{subsec::co_teaching}) in Section \ref{subsec::combination}.
The reason for this combination will be discussed in detail in Section \ref{subsec::comco}.

\subsection{Compression regularizations} \label{subsec::nested}
Here, we consider two compression regularizations that are plug-and-play modules, which can be inserted into common network architectures.
For the sake of clarity, we summarize some necessary notations here.
Let $\tilde{Z} \in \mathbb{R}^{\text{channels}\times \text{height}\times \text{width}}$ be the hidden feature representation obtained by the feature network $f$, i.e.,~$\tilde{Z}=f(X)$.
Note that we set the number of channels to be $K$ and leave out the rest for simplicity, that is, $\mathbb{R}^{K\times \cdots}$. 
In this paper, we treat compression methods as applying masks to the obtained feature $\tilde{Z}$. 
In this manner, let $M \in \mathcal{M}$ be the feature mask where the space $\mathcal{M}$ can vary for different compression methods.
The feature with mask applied is denoted by $Z = M \odot \tilde{Z}$ where $\odot$ is the element-wise product.
Then, $Z$ will be fed into the subsequent network structures.
The two compression methods are given w.r.t.~their specific mask distributions as follows:
 
\begin{figure}[t]
	\begin{minipage}[t]{0.38\textwidth}  
		\centering  
		\includegraphics[width=\textwidth]{./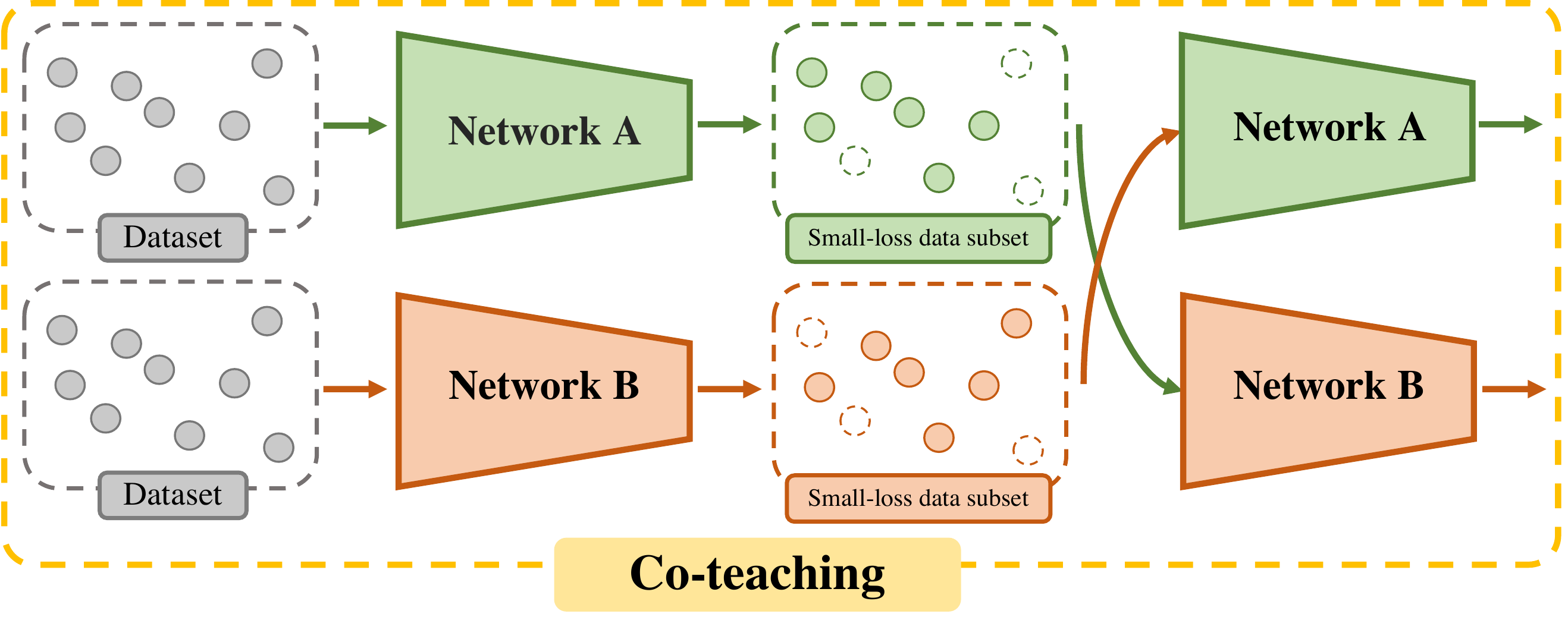}
	\end{minipage}  
	\centering  
	\caption{Co-teaching trains two networks simultaneously where base network updates itself based on the small-loss mini-batch samples selected by its peer.}
	\label{fig::coteaching}
	\vspace{-3mm}
\end{figure}

\subsubsection{Dropout}
Dropout~\cite{srivastava2014dropout} is one classical method for feature compression where each feature in the network layer it applies is dropped according to a Bernoulli distribution.
The space of its feature mask $\mathcal{M}$ is defined by
\begin{align} \label{eq::mask_drop}
    \mathcal{M} := \{ M \in  
    \mathbb{R}^{K\times \cdots}\, | \,
    \forall \, 1 \leq k \leq K, M_k \sim \mathcal{B}(p_{\text{drop}})\}
\end{align} 
where $\mathcal{B}$ is the Bernoulli distribution with $M_k$ being either $\mathbf{1}$ or $\mathbf{0}$, and $p_{\text{drop}}$ is the drop rate. 

\subsubsection{Nested Dropout}
Nested Dropout \cite{rippel2014learning} learns ordered representations with different dimensions having different degrees of importance.
Although it is originally proposed to perform fast information retrieval and adaptive data compression, we find that it can properly regularize a network to combat label noise.
In particular, while Nested Dropout is applied, the meaningless representations can be dropped, which leads to a compressed network~\cite{gomez2019learning}.
Considering above, these ordered representations can be adapted to learning with noisy labels since representations learned from noisy data are supposed to be meaningless. 
Consequently, Nested Dropout may serve as a strong substitute of Dropout.

In order to obtain an ordered feature representation, in each training iteration, we only keep the first $k$ dimensional feature of $\tilde{Z}$ and mask the rest to zeros, that is, $M \in \mathcal{M}$ where
\begin{align} \label{eq::mask_nested}
    \mathcal{M} &:= \{ M \in \mathbb{R}^{K\times\cdots}\, | \, 
    \forall \, k \sim \mathcal{C}(p_1,\ldots,p_K),
    \nonumber\\
    & \forall \, 1 \leq i \leq k, 
    M_i = \mathbf{1} \, 
    \text{and}\,  \forall \, k < i \leq K, M_i = \mathbf{0}\},
\end{align}
with $\mathbf{1}$ and $\mathbf{0}$ being all-ones and all-zeros tensors, respectively.
Moreover, $k$ is sampled from a categorical distribution denoted by $\mathcal{C}$ with corresponding parameters as follows:
\begin{align} 
\label{eq::CatGaussian}
    \Big\{ p_k \propto \exp\Big( -\frac{k^2}{2\, \sigma_{\text{nest}}^2} \Big), \quad \forall k=1,\ldots,K \Big\}
\end{align}
where $\sigma_{\text{nest}}$ is the major hyper-parameter in Nested Dropout.
In this case, smaller $k$ is preferred if $\sigma_{\text{nest}}$ is small.
Moreover, though we could compute $\mathbb{E}_{\mathrm{P}_M}(Z):=\mathbb{E}_{\mathrm{P}_M}(M\odot \tilde{Z})$ with $\mathrm{P}_M$ being \eqref{eq::CatGaussian} exactly during inference, we find it more efficient to verify which $k$ yields the best performance on the validation set, and then keep the model induced by $k$ for testing. 

Note that rather than treating Dropout and Nested Dropout merely as regularizations, we focus on their ability of inducting ensemble of models, i.e.,~\eqref{eq::lvm}, which will be carefully discussed in Section \ref{subsec::overMemo}.
We underline this ensemble property in Fig.~\ref{fig::compress}.

\begin{figure}[t]
	\begin{minipage}[t]{0.45\textwidth}
		\centering  
		\includegraphics[width=\textwidth]{./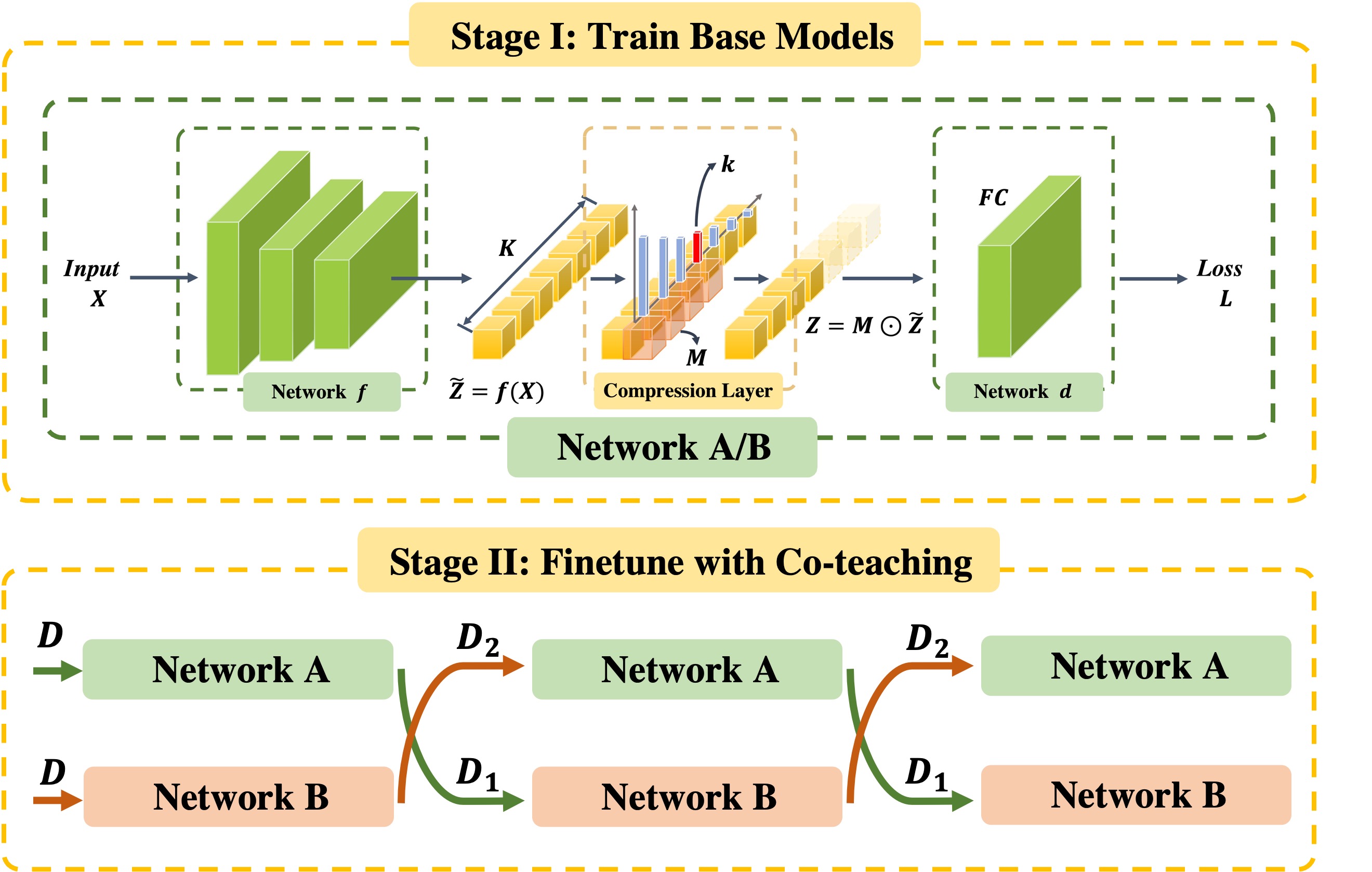}
	\end{minipage}  
	\centering  
    \caption{Overview of our method. In stage one, the hidden activation $\tilde{Z}$ is computed by a feature extractor $f$. 
    Dropout/Nested Dropout is applied to $\tilde{Z}$ by masking some of the features to zeros, i.e.,~$Z=M\odot \tilde{Z}$. 
    The compressed feature $Z$ is then fed into the network structure $d$, which can simply be a fully connected layer (FC), to perform the final prediction. 
    In stage two, the two base networks are fine-tuned with Co-teaching.}
	\label{fig::workflow}
	\vspace{-3mm}
\end{figure}

\subsection{Co-teaching} \label{subsec::co_teaching}
Co-teaching~\cite{han2018co} is a baseline method for learning under label noise.
It trains two deep networks with identical architecture, i.e.,~$h_1$ and $h_2$, simultaneously where each network selects its $100(1 - \lambda_{\text{forget}}$) percent \textit{small-loss} instances, leading to $\mathcal{D}_1$ and $\mathcal{D}_2$ respectively, where
$\lambda_{\text{forget}}$ is the forget rate.
Note that $\lambda_{\text{forget}}$ is a crucial hyper-parameter in the Co-teaching architecture.
Networks update themselves basing on the data subset selected by their peers.
We provide a illustration for clarity in Fig.~\ref{fig::coteaching}.

Co-teaching bases on the concept that small-loss instances are more likely to be clean \cite{han2018co, jiang2018mentornet, yu2019does, tanaka2018joint, kumar2010self}.
Therefore, classifiers trained on them are supposed to be more resistant to noisy labels.
However, one non-negligible premise is that base models should be reliable enough to select samples which are indeed clean.
To prevent constantly bad selections, a scheduling has been proposed in \cite{han2018co}.
That is, Co-teaching first keeps all the samples in the mini-batch, then gradually decreases the sample size in $\mathcal{D}_1$ and $\mathcal{D}_2$ till the predefined $N$-th epoch, after which the sample size used for training kept fixed.
Nevertheless, we experimentally find that the tuning of $N$ is not stable since $N$ varies with different levels of label noise.
Therefore, rather than training Co-teaching with random initialized base models and tuning on $N$, we employ well-trained models as initialization for better and stable performance.

\begin{algorithm}[t]
	\caption{Two-stage compression training}
	\label{alg::twostage}
	\KwIn{
		training data $D$ with size $|D|$, 
		compression hyper-parameters $\{\sigma_{\text{nest}}, p_{\text{drop}}\}$,
		two initialized networks $h_1$, $h_2$,
		loss $L_q$ \eqref{eq::newLoss},
		forget rate $\lambda_{\text{forget}}$.
	}   
	{\bf Ensure:} Either train with Dropout ($p_{\text{drop}}>0$) by \eqref{eq::mask_drop}, \\
	\quad \quad \quad \, or Nested Dropout ($\sigma_{\text{nest}}>0$) by \eqref{eq::mask_nested}, \eqref{eq::CatGaussian}.

	\While{$h_1$, $h_2$ not converge}{ 
        Train $h_1$, $h_2$ independently on $D$ with loss $L_q$ under
        (Nested) Dropout;
	}
	
	\While{Fine-tune with Co-teaching}{
	Randomly separate mini-batch $D_{m}$ into two subsets: $D_{m_1}$, $D_{m_2}$ with $|D_{m_1}|=|D_{m_2}|$;\\
	$h_1$ selects $(1-\lambda_{\text{forget}})|D_{m_1}|$ small-loss data $\tilde{D}_{m_1}$;\\
	$h_2$ selects $(1-\lambda_{\text{forget}})|D_{m_2}|$ small-loss data $\tilde{D}_{m_2}$;\\
	Train $h_1$ on $\tilde{D}_{m_2}$, $h_2$ on $\tilde{D}_{m_1}$ independently with loss $L_q$ under (Nested) Dropout;\\
	}
	\KwOut{$(h_1 + h_2)/2$}
\end{algorithm}

\subsection{Combination} \label{subsec::combination}
Now we combine the compression regularizations with Co-teaching in a two-stage manner:
\begin{itemize}
    \item Train two \textit{Dropout} / \textit{Nested Dropout} networks separately until convergence;
    \item Fine-tune these two networks with \textit{Co-teaching}.
\end{itemize}

Co-teaching is chosen for fine-tuning since its cross-update mechanism would help in alleviating the over-fitting issue over a single model.
As mentioned in \cite{han2018co}, different classifiers are able to generate different decision boundaries and then have different abilities to learn.
During training on noisy labels, we expect the two neural networks to adaptively compress out the noisy information left by their peer networks where samples with obviously corrupted labels have been already excluded.
Hence, the base networks are less likely to overfit the corrupted labels. 
The above idea is validated in Section \ref{subsec::comco} with the help of a \textit{bias-variance decomposition} for Co-teaching.

We now specify our method.
In the first stage, two networks are trained independently until convergence so as to provide better base models for Co-teaching.
Moreover, a learning rate warm-up is set to cope with the difficulty of training with Dropout/Nested Dropout in the early epochs, which results from the high probability of dropping most of the channels in the feature layer when $p_{\text{drop}}$ is large or $\sigma_{\text{nest}}$ is small.
In the second stage, since we only fine-tune the networks with Co-teaching, Dropout/Nested Dropout is maintained during the training of each model except for the selection procedure of small-loss data subsets $\mathcal{D}_1$, $\mathcal{D}_2$. 
Note that the performance of Co-teaching also depends on the diversity of the base models.
In this case, we modify the original Co-teaching \cite{han2018co} with batch separation strategy where each batch is divided into two data subsets with equivalent size, and small-loss data selections are then conducted on these two subsets separately.
The final result is the accuracy of the ensembled model. 
The workflow of our two-stage method is given in
Fig.~\ref{fig::workflow} and Algorithm \ref{alg::twostage} for clarity.

\section{Theoretical analysis} \label{sec::theory}
This section provides the motivation and validity of our method.
Notations and basic concepts in need are given in Section \ref{subsec::preliminaries}.
We state that the key issue in combating label noise is to prevent networks from over-fitting based on a bias-variance decomposition in Section \ref{subsec::overMemo}.
To this end, we recall two compression regularizations which can be treated as implicit information bottleneck in Section \ref{subsec::compression}.
More on Nested Dropout is in Section \ref{subsec::nested_ranking}.
Finally, we verify that the Co-teaching combination leads to even less over-fitting based on the bias-variance decomposition in \ref{subsec::comco}.
For the sake of clarity, all the proofs in this section are given in the Appendix.

\subsection{Preliminaries} \label{subsec::preliminaries}
First of all, we formulate the problem of learning with noisy labels.
Let $X \in \mathcal{X}$ be the input variable where $\mathcal{X}$ is the input feature space.
We consider the data generation process for the training set:
\begin{align} \label{eq::generation}
    x \sim p(x), \quad \varepsilon \sim p(\varepsilon), \quad 
    y \sim p(y | x, \varepsilon),
\end{align}
where $\varepsilon$ is the noise occurred during labelling. 
In this manner, we denote by $Y \in \mathcal{Y}$ the contaminated label where $\mathcal{Y}$ is the corresponding signal space.
The goal is to learn a model on the corrupted dataset 
for testing on clean data drawn from the same generative process expect that $\epsilon \approx 0$.

Next, we cover some basic concepts in information theory \cite{shannon1948mathematical}.
The \textit{entropy} gives the amount of information coded in a distribution or equivalently the uncertainty about a random variable, and it is defined as the average code length:
\begin{align*}
    H_p(Y) := - \int_{\mathcal{Y}} p(y) \log p(y) \, dy
\end{align*}
where $p(y)$ is a fixed probability measure on $\mathcal{Y}$.
Similarly, \textit{conditional entropy} gives the amount of information about one random variable given another random variable:
\begin{align*}
    H_p(Y | X) := - \int_{\mathcal{Y}} \int_{\mathcal{X}} p(x, y) \log p(y | x) \, dx dy
\end{align*}
where $p(x, y)$ is the joint probability measure on $\mathcal{X} \times \mathcal{Y}$ and $p(y|x)$ is the conditional p.d.f.
The \textit{cross entropy} with respect to a model distribution $q(y|x)$ is defined by
\begin{align} 
    H_{p,q} (Y|X)
    & := \mathbb{E}_{p(x)} \mathbb{E}_{p(y|x)} [- \log q(y|x)] 
    \nonumber \\
    & = \mathbb{E}_{p(x)} \mathbb{E}_{p(y|x)} \bigg[- \log p(y|x) + \log \frac{p(y|x)}{q(y|x)}\bigg]
    \nonumber \\
    & \geq H_p(Y|X),\label{eq::CE2}
\end{align}
which upper bounds the conditional entropy as in \eqref{eq::CE2}. 
Note that the inequality holds for that the Kullback–Leibler divergence, i.e.,~the second term, is always non-negative.
A related quantity is the \textit{cross-entropy} loss $-\log q(y|x)$.
The \textit{mutual information} measures the statistical dependency between random variables $X$ and $Y$ by comparing their joint density with the product of each marginal density:
\begin{align*}
    I(X; Y) := \int_{\mathcal{Y}} \int_{\mathcal{X}} p(x, y) \log \frac{p(x,y)}{p(x) p(y)} \, dx dy.
\end{align*}
Note that since the mutual information is a function of $p(x, y)$, we modify the notation to $I_p(X; Y)$ for better emphasizing on the actual variable.
Moreover, given a factorization $p(x,y) = p(x) p(y|x)$, we have
\begin{align*}
    I_p(X; Y) = H_p(Y) - H_p(Y|X)
\end{align*}
which leads to another interpretation, that is, the reduction in uncertainty of $Y$ by knowing $X$.
In addition, the \textit{conditional mutual information} is defined as follows:
\begin{align*}
    I_p(X; Y|Z) := \int_{\mathcal{Z}} \int_{\mathcal{Y}} \int_{\mathcal{X}} p(x,y,z) \log \frac{p(x,y|z)}{p(x|z)p(y|z)} \, dx dy dz.
\end{align*}

Here and subsequently, we let $\mathbb{E}_{p(x)}[x] \equiv \mathbb{E}_{\mathrm{P}_X}[X]$ stand for the expected value of $X$ and $D_{\text{KL}}(\cdot)$ for the Kullback–Leibler divergence.
Besides, we denote the capital Roman alphabet for random variables or matrices and their lowercase for the values.
Moreover, we denote by $Z_i$ the $i$-th row if $Z$ is a matrix or the $i$-th channel if $Z$ is a $3$-dimensional tensor. 
We also write $Z_{i:j}$ for the slice from the $i$-th channel to the $j$-th channel, and $\odot$ for the element-wise multiplication.

\subsection{Bias-variance decomposition for noisy labels} \label{subsec::overMemo}
Considering that deterministic networks are likely to overfit the noisy training set, we introduce \textit{an ensemble of models} and rely on the intersection of these models to extract consistent information.
The idea is that the information learned from the noise are less likely to be consistent across different models.
This motivates us to consider the \textit{latent variable model} since it can be treated as an ensemble of models:
\begin{align} \label{eq::lvm}
    q(y|x) := \int_{\mathcal{Z}} q(y|z) q(z|x) \, dz,
\end{align}
where $q(z|x)$ is the encoder, and $q(y|z)$ is the decoder or can even be an individual model induced by a particular instance of $z$.
For practical reasons, we would like to use existing network architectures, such as ResNet \cite{he2016deep}, to construct the encoder $q(z|x)$ and the decoder $q(y|z)$.
Our strategy is to split an entire network architecture, e.g.~a ResNet-18, into two parts as shown in Fig.~\ref{fig::workflow}.
In this case, it is natural to take the second part plus a softmax layer to implement $q(y|z)$.
The first part is however insufficient to implement $q(z|x)$ as we will discuss later.

Since the cross-entropy loss $-\log q(y|x)$ is intractable due to the integral in \eqref{eq::lvm}, we consider a surrogate quantity to $q(y|x)$ using Jensen's inequality:
\begin{align} \label{eq::surrogate}
    \tilde{q}(y|x) \propto \exp \big[\mathbb{E}_{q(z|x)} \log q(y|z)\big]
    \leq q(y|x)
\end{align}
and therefore we define the new loss function as the negative log-likelihood with respect to $\tilde{q}(y|x)$:
\begin{align} \label{eq::newLoss}
    L_q(x,y) := \mathbb{E}_{q(z|x)} \big[-\log q(y|z) \big] \propto -\log \tilde{q}(y|x).
\end{align}

Based on \eqref{eq::generation}, \eqref{eq::surrogate} and \eqref{eq::newLoss}, we now derive the bias-variance decomposition of the proposed latent variable with the new loss function under label noise as follows:
\begin{theorem} \label{thm::decomp}
Let $x,y\sim p(x,y) = p(x)p(y|x)$ where $p(y|x) = \int p(y|x,\varepsilon) p(\varepsilon)\, d\varepsilon$, for loss $L_q(x,y)$ defined in \eqref{eq::newLoss}, the risk has a bias-variance decomposition:
\begin{align} \label{eq::bvDecompo}
    & \mathbb{E}_{p(x,y)}\big[ L_q(X,Y)\big] 
    = \mathbb{E}_{p(x)} \Big[ \underbrace{D_{\text{KL}}\big( p(y|x) \| \tilde{q}(y|x) \big) }_{\text{bias}}  
    \nonumber\\
    & \qquad + \underbrace{\mathbb{E}_{q(z|x)} D_{\text{KL}}\big( \tilde{q}(y|x) \| q(y|z) \big)}_{\text{variance}} \Big] + \text{const}
\end{align}
where $\tilde{q}(y|x) \propto \exp \big[\mathbb{E}_{q(z|x)} \log q(y|z)\big]$ is the average or an ensemble of models.
\end{theorem}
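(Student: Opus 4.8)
The plan is to reduce everything to the normalizing constant of $\tilde{q}$ and then recognize that this constant is \emph{precisely} the variance term. First I would make the normalization explicit: write $\tilde{q}(y|x) = C(x)^{-1}\exp\big[\mathbb{E}_{q(z|x)}\log q(y|z)\big]$ with $C(x) := \int_{\mathcal{Y}}\exp\big[\mathbb{E}_{q(z|x)}\log q(y|z)\big]\,dy$. Taking logarithms gives $\mathbb{E}_{q(z|x)}\log q(y|z) = \log\tilde{q}(y|x) + \log C(x)$, so the loss \eqref{eq::newLoss} becomes $L_q(x,y) = -\log\tilde{q}(y|x) - \log C(x)$, where crucially $\log C(x)$ depends on $x$ only.

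Next I would take the expectation $\mathbb{E}_{p(x,y)}$ of this identity. For the first piece I would recognize a cross-entropy and split it via the standard identity $\mathbb{E}_{p(y|x)}[-\log\tilde{q}(y|x)] = H_p(Y|X=x) + D_{\text{KL}}\big(p(y|x)\|\tilde{q}(y|x)\big)$, exactly in the spirit of \eqref{eq::CE2}. Averaging over $p(x)$ produces the bias term $\mathbb{E}_{p(x)}D_{\text{KL}}\big(p(y|x)\|\tilde{q}(y|x)\big)$ plus the conditional entropy $H_p(Y|X)$, which does not depend on the model $q$ and is therefore absorbed into $\text{const}$. Since $\log C(x)$ is independent of $y$, the remaining piece is simply $-\mathbb{E}_{p(x)}[\log C(x)]$.

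It remains to show that $-\log C(x)$ equals the variance term, and this is the one step that requires an actual computation rather than bookkeeping. Expanding
$\mathbb{E}_{q(z|x)}D_{\text{KL}}\big(\tilde{q}(y|x)\|q(y|z)\big) = \int\tilde{q}(y|x)\log\tilde{q}(y|x)\,dy - \int\tilde{q}(y|x)\,\mathbb{E}_{q(z|x)}[\log q(y|z)]\,dy$, I would substitute $\mathbb{E}_{q(z|x)}\log q(y|z) = \log\tilde{q}(y|x) + \log C(x)$ into the second integral. The $\int\tilde{q}\log\tilde{q}$ contributions cancel, and because $\tilde{q}(y|x)$ integrates to one the surviving term is exactly $-\log C(x)$. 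Combining the three pieces yields \eqref{eq::bvDecompo} with $\text{const} = H_p(Y|X)$. The only hypothesis to watch is that the Jensen surrogate $\tilde{q}$ is a genuine normalized density, so that $C(x)$ is finite and $\int_{\mathcal{Y}}\tilde{q}(y|x)\,dy = 1$; granting that, the argument is a short chain of substitutions, with the cancellation against $\log C(x)$ being the single idea that makes the variance interpretation appear.
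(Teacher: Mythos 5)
Your proof is correct, but it takes a genuinely different route from the paper's. The paper first conditions on the noise variable: it rewrites $\mathbb{E}_{p(y|x)}[L_q(x,Y)]$ as $\mathbb{E}_{p(\varepsilon)}\mathbb{E}_{q(z|x)} D_{\text{KL}}\big(p(y|x,\varepsilon)\,\|\,q(y|z)\big) + H(Y|X=x,\varepsilon)$, and then invokes an external result (Theorem 3.1 of Brofos et al.) to split the double-expected KL into the bias term, the variance term, and $I(Y;\varepsilon)$; its constant is therefore $H(Y|X,\varepsilon)+I(Y;\varepsilon)$, which coincides with your constant $H_p(Y|X)$ via the chain rule $H(Y|X)=H(Y|X,\varepsilon)+I(Y;\varepsilon|X)$. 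You instead never touch $\varepsilon$: you make the normalizer $C(x)$ of $\tilde{q}$ explicit, write $L_q(x,y)=-\log\tilde{q}(y|x)-\log C(x)$, peel off the bias via the standard cross-entropy identity, and then show by direct substitution and cancellation that $-\log C(x)=\mathbb{E}_{q(z|x)}D_{\text{KL}}\big(\tilde{q}(y|x)\,\|\,q(y|z)\big)$. What your argument buys: it is self-contained (no imported lemma — in effect you re-prove the needed special case of it), it shows the decomposition holds for an arbitrary conditional $p(y|x)$ with the noise structure playing no role in the proof, and it yields the clean interpretation that the variance term is exactly the Jensen gap of the surrogate \eqref{eq::surrogate}, i.e.\ the negative log-normalizer of the geometric ensemble (in particular it is nonnegative, since $C(x)\leq 1$ by Jensen's inequality). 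What the paper's route buys: by carrying $\varepsilon$ explicitly, its constant decomposes into the intrinsic labeling entropy $H(Y|X,\varepsilon)$ plus the noise information $I(Y;\varepsilon)$, which serves the paper's narrative about what part of the risk is attributable to label noise. Your one flagged hypothesis — that $C(x)$ is finite and nonzero so $\tilde{q}$ is a genuine density — is the same support/integrability condition implicitly assumed by the paper's lemma, so nothing is lost there.
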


Intuitively, the bias term in \eqref{eq::bvDecompo} determines how close the average model $\tilde{q}(y|x)$ is to $p(y|x)$ and $p(y|x)$ is the conditional probability for the noisy $Y$, while the variance term promotes a consensus among individual models.
The variance term also serves as a regularization to combat label noise in the sense that the consensus downweights the influence of the incorrect labels.
Unlike learning with clean data, we do not expect low bias as it indicates model's over-fitting to label noise.
Instead, we rely on the variance term and early stopping to provide good training signals.
In regard of above, the problem of learning with noisy labels can be simplified to \textit{how can we prevent models from over-fitting the noisy training labels?}

\begin{figure}[t]
	\begin{minipage}[t]{0.45\textwidth}  
		\centering  
		\includegraphics[width=\textwidth]{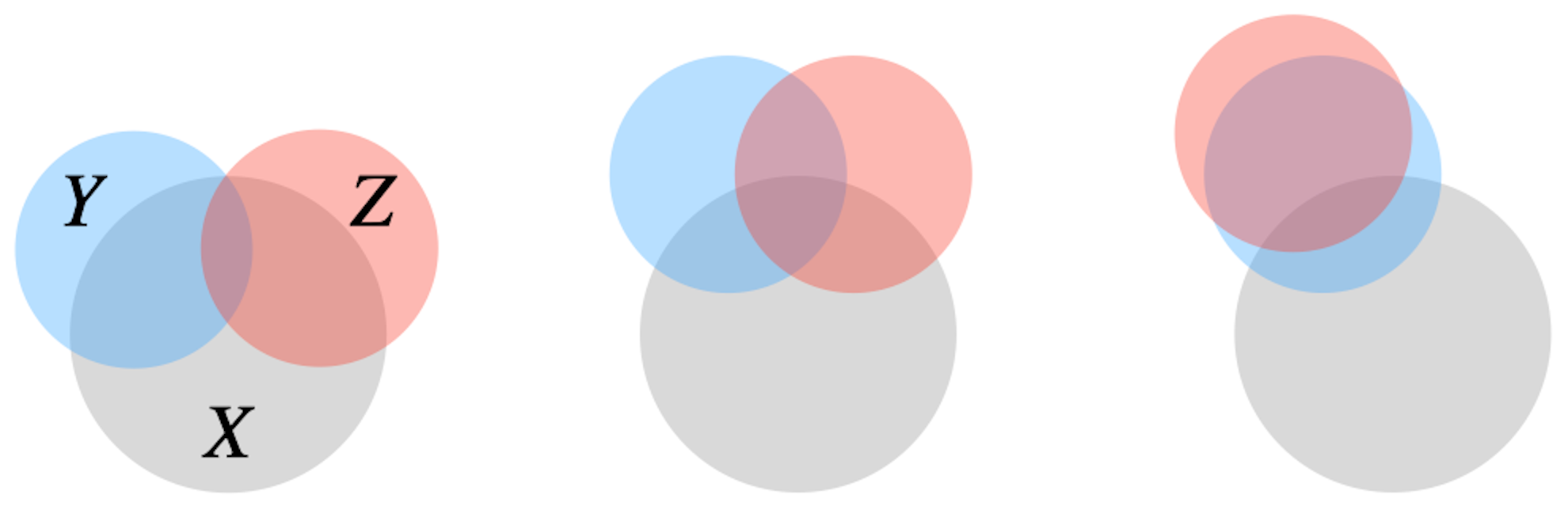}
	\end{minipage}  
	\centering  
	\caption{Information diagrams for the relationship among three random variables.
	Each circle represents the entropy of the corresponding random variable. The intersection between two circles is the mutual information.
	\textbf{Left}: if $Y \rightarrow X \rightarrow Z$ holds. \textbf{Middle}: the general case. \textbf{Right}: an over-fitting case where $I(Y; Z)$ is large but $I(X; Z)$ is small.}
	\label{fig::infodiagram}
	\vspace{-3mm}
\end{figure}
It is worth mentioning that a careful design of the encoder $q(z|x)$ is necessary in order to make better use of the variance term.
For instance, if we choose $q(z|x)$ to be a Dirac delta function, i.e.,~a deterministic mapping from $x$ to $z$, the variance term is zero.
As a result, the training will focus on minimizing the bias term, leading to an easy over-fitting to the distribution of noisy labels.
On the other hand, if the variance is too large, there will be little consensus among individual models, and therefore, no consistent information could be learned by the latent variable model.
Thus in Section \ref{subsec::compression}, we propose to design $q(z|x)$ by incorporating a compression inductive bias for better combating label noise.

\subsection{Compression regularizations} \label{subsec::compression}
We would like to create an information bottleneck for the latent variable model. 
In this manner, the noisy information can be filtered out systematically.
To be specific, we propose to create an information bottleneck by masking and damping the output of the feature extractor $f(X)$:
\begin{align}
    & \min_{p} \, -I_p(Y;Z)
    \label{eq::newIB}\\
    & \text{s.t.} \quad Z = M \odot f(X),\, M \sim \mathrm{P}_M \nonumber
\end{align}
where the distributions of $M$, i.e.,~$\mathrm{P}_M$, is an extrinsic source of randomness, which is tuned on a held-out clean dataset.
Here $M$ is also called mask as defined in \eqref{eq::mask_drop} and \eqref{eq::mask_nested}.

However, here comes the question that \textit{why don't we use Tishby's information bottleneck \cite{tishby2000information} directly for compression?}
As mentioned in \cite{tishby2000information}, the relationship among the input $X$, the label $Y$ and the feature representation $Z$ in the network is given by a Markov chain:
$Y \to X \to Z$.
In this consideration, \cite{tishby2000information} proposes to learn a good feature representation by minimizing the weighted sum of the data fitting term $-I_p(Y; Z)$ and the complexity term $I_p(X;Z)$ with respect to the distribution $p(x,y,z)$, that is
\begin{align} \label{eq::IB}
    \min_{p} \, -I_p(Y;Z) + \beta I_p(X;Z).
\end{align}
However, when learning with noisy labels, there is no clear causal relationship between $Y$ and $X$.
Therefore, Tishby's information bottleneck principle cannot be applied in this case.
To be more specific, we may argue in terms of the information diagrams shown in Fig.~\ref{fig::infodiagram}.
The left image in Fig.~\ref{fig::infodiagram} visualizes the Markov chain $Y \to X \to Z$, which is called the \textit{Mickey Mouse I-diagram} in \cite{kirsch2020unpacking} since $I(Y;Z|X) = 0$, which implies that $I(X;Y;Z) = I(Y;Z)$.
In this case, $I(Y;Z)$ is always smaller than $I(X;Z)$, hence we can prevent $Z$'s over-fitting to $Y$ by reducing $I(X;Z)$.
However, in general, $I(Y;Z|X) \neq 0$ as shown in the middle in Fig.~\ref{fig::infodiagram}, and we can find cases where $I(X;Z)$ is small but $Z$ overfits label noise.
To conclude, the traditional information bottleneck \eqref{eq::IB} may not be effective when dealing with label noise.

Comparing to Tishby's IB \eqref{eq::IB}, we discard the term $I(X;Z)$ in \eqref{eq::newIB} completely and rely only on the held-out clean dataset to remove noisy information.
Since $I_p(Y;Z)$ is intractable, we further adjust it to be computationally available.
Instead of estimating the joint distribution $p(x,y,z)$, we consider a surrogate joint distribution $q(x,y,z) = p(x,y)q(z|x) \approx p(x,y,z)$, and access to $p(x,y)$ only through its samples.
Note that the idea is similar to the variational information bottleneck by \cite{alemi2016deep}.
Specifically, we first identify that $-I_p(Y;Z) = H_p(Y|Z) - H_p(Y)$ where $H_p(Y)$ is a constant for the representation learning, and hence we approximate $H_p(Y|Z)$ by
\begin{align*}
    & H_p(Y|Z) \leq H_{p,q} (Y|Z) = \mathbb{E}_{p(x,y,z)}[-\log q(y|z)]
    \nonumber \\
    & \approx \mathbb{E}_{p(x,y)} \mathbb{E}_{q(z|x)} \big[ - \log q(y|z)\big]
    = \mathbb{E}_{p(x,y)} \big[ L_q (X, Y) \big].
\end{align*}

Now, we rewrite our learning with noisy labels as follows:
\begin{align}
    & \min_{q} \, \mathbb{E}_{p(x,y)} \big[ L_q (X, Y) \big]
    \nonumber \\
    & \text{s.t.} \quad Z = M \odot f(X),\, M \sim \mathrm{P}_M
    \label{eq::mask}
\end{align}
where $L_q(x,y) := \mathbb{E}_{q(z|x)} \big[-\log q(y|z) \big]$ as in \eqref{eq::newLoss}.
As such, we only need to learn the proposed $q(z|x)$ and $q(y|z)$.
In particular, we propose an implicit parameterization for $q(z|x)$ as specified in \eqref{eq::mask}, which involves a feature extractor $f(X)$ and an external random variable $M$.
We also proposed to learn $q(z|x)$ and $q(y|z)$ jointly by minimizing $\mathbb{E}_{p(x,y)} \big[ L_q (X, Y) \big]$, which will be optimized by stochastic gradient descent (SGD) as we only have access to the samples of $X$, $Y$ and $Z$.

\subsubsection{Dropout}
If we set $\mathrm{P}_M$ to a Bernoulli distribution as in \eqref{eq::mask_drop}, then \eqref{eq::mask} exactly covers Dropout.
However, different from the original formulation in \cite{srivastava2014dropout}, we formulate Dropout into the framework of latent variable models with our loss \eqref{eq::newLoss}.
In this manner, Dropout is the baseline of incorporating compression inductive bias for combating label noise.

\subsubsection{Nested Dropout}
If we specify $M\in\mathcal{M}$ where $\mathcal{M}$ is \eqref{eq::mask_nested}, and 
$\mathrm{P}_M$ as \eqref{eq::CatGaussian}, then \eqref{eq::mask} exactly covers the Nested Dropout.
More properties are provided in the next subsection.

\begin{figure}[t]
	\begin{minipage}[t]{0.4\textwidth}  
		\centering  
		\includegraphics[width=\textwidth]{./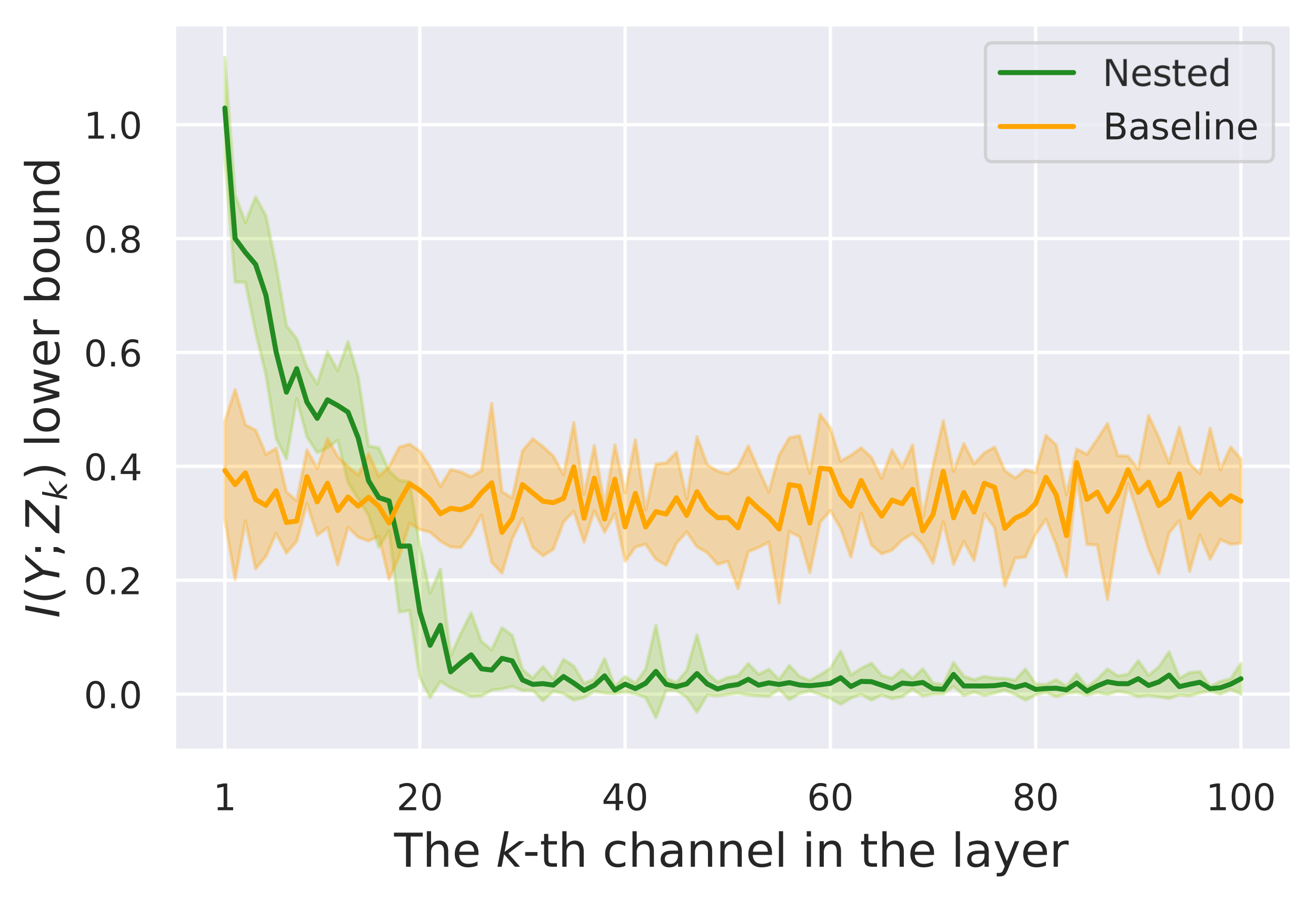}
	\end{minipage}  
	\centering  
	\caption{The mutual information of different channels of models trained on the classification task on CIFAR-10 \cite{krizhevsky2009learning}. 
	The result is the average after $3$ runs with $\pm 95\%$ confidence interval.
	We compare the model trained with and without Nested Dropout. Baseline refers to the one without Nested Dropout.}
	\label{fig::MI}
	\vspace{-3mm}
\end{figure}

\subsection{Nested Dropout} \label{subsec::nested_ranking}
Nested Dropout is a variant of Dropout where the importance of each feature channel is sorted from high to low, while channels in Dropout model are of equal importance.
For better understanding of Nested Dropout's sorting property, we theoretically work on it through mutual information.

Before presenting the theorem on the information sorting property, we need the following Assumption \ref{ass::permu} where hidden features are supposed to be exchangeable \cite{aldous1985exchangeability}.

\begin{assumption} \label{ass::permu}
    Let $X$ be the input, $f$ be the feature extractor, $d$ be the subsequent network structure including the classification head, and $\tilde{Z}=f(X)\equiv [\tilde{Z}_k]_{k=1}^K$ be the hidden feature representation.
    The hidden feature representation $\tilde{Z}$ is exchangeable, and $d$ is also exchangeable.
    That is, for any permutation $\pi$, the model satisfies that
    \begin{align}
        (\tilde{Z}_1,\tilde{Z}_2,\ldots,\tilde{Z}_K)
        &\overset{D}{=} 
        (\tilde{Z}_{\pi_1},\tilde{Z}_{\pi_2},\ldots,\tilde{Z}_{\pi_K}), \label{eq::permu}
        \\
        Y|\tilde{Z}_1,\tilde{Z}_2,\ldots,\tilde{Z}_K
        &\overset{D}{=} 
        Y|\tilde{Z}_{\pi_1},\tilde{Z}_{\pi_2},\ldots,\tilde{Z}_{\pi_K}, \label{eq::dec_permu}
    \end{align}
    where $\overset{D}{=}$ denotes equivalence in distribution.
\end{assumption}

We find this can serve as a valid assumption considering the network architectures.
According to de Finetti's theorem \cite{heath1976finetti}, a sequence of random variables $(\tilde{Z}_1,\tilde{Z}_2,\ldots)$ is infinitely exchangeable iff,
$p(\tilde{z}_1,\tilde{z}_2,\ldots,\tilde{z}_n) = \int \prod_{i=1}^n p(\tilde{z}_i|\theta)P(d\theta)$, for all $n \in \mathbb{N}_{+}$ and some measure $P$ on $\theta$.
If we consider a simple MLP where $\mathbf{\tilde{z}}=f(\mathbf{x})=\mathbf{W}_1\mathbf{x} + \mathbf{b}_1$, $\hat{\mathbf{y}}= d(\mathbf{\tilde{z}}) = \mathbf{W}_2\mathbf{\tilde{z}} + \mathbf{b}_2$ and denote $\theta=[\mathbf{W}_1, \mathbf{b}_1, \mathbf{x}]$.
In this way, for any permutation of the hidden feature $\mathbf{\tilde{z}}_{\pi}:=\pi(\mathbf{\tilde{z}})$, it can be obtained by $\pi(\mathbf{W}_1) \pi(\mathbf{x}) + \pi(\mathbf{b}_1)$.
As it requires to integrate all the possible $\theta$ to obtain $p(\mathbf{\tilde{z}})$ and $p(\mathbf{\tilde{z}}_{\pi})$,
we then have $p(\mathbf{\tilde{z}})=p(\mathbf{\tilde{z}}_{\pi})$.
Since $\hat{\mathbf{y}}_{\pi} = \pi(\mathbf{W}_2) \mathbf{\tilde{z}}_{\pi} + \pi(\mathbf{b}_2) = \hat{\mathbf{y}}$, then $d$ is exchangeable.
Moreover, similar argument can be derived by considering the features after global average pooling in CNNs.

\begin{theorem}  \label{thm::ranking}
    Let $X$ be the input, $f$ be the feature extractor, and $\tilde{Z}=f(X)\equiv [\tilde{Z}_k]_{k=1}^K$ be the hidden feature representation.
    Suppose that the model satisfies Assumption \ref{ass::permu}.
    Then, we have for $Z_k = M_k \odot \tilde{Z}_k$,
    \begin{align}
        &I(Y; \tilde{Z}_1) = \cdots = I(Y; \tilde{Z}_K), \label{eq::IYH} \\
        &I(Y; Z_1) \geq \cdots \geq I(Y; Z_K) \label{eq::IYZ}.
    \end{align}
\end{theorem}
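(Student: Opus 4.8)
The plan is to handle the two displays \eqref{eq::IYH} and \eqref{eq::IYZ} separately, since the first is a pure consequence of exchangeability while the second additionally exploits the nested structure of the mask together with the first.

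For \eqref{eq::IYH}, I would first promote the two invariances in Assumption~\ref{ass::permu} to a single statement about the full joint law. Writing $p(y,\tilde z_1,\ldots,\tilde z_K) = p(\tilde z_1,\ldots,\tilde z_K)\,p(y\mid \tilde z_1,\ldots,\tilde z_K)$, equation \eqref{eq::permu} makes the first factor permutation-invariant and \eqref{eq::dec_permu} makes the conditional a symmetric function of its arguments, so the joint density of $(Y,\tilde Z_1,\ldots,\tilde Z_K)$ is invariant under permuting the feature coordinates. Marginalizing out every coordinate except the $i$-th, and then applying the transposition that swaps $i$ and $j$, yields $(Y,\tilde Z_i)\overset{D}{=}(Y,\tilde Z_j)$ for all $i,j$. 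Since $I(Y;\tilde Z_k)$ is a functional of the pairwise law of $(Y,\tilde Z_k)$ alone, all these mutual informations coincide, proving \eqref{eq::IYH}.

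For \eqref{eq::IYZ}, the key is that under the nested mask \eqref{eq::mask_nested}--\eqref{eq::CatGaussian} the $i$-th channel survives exactly when $k\ge i$, so its marginal activation probability is $\rho_i := \Pr[M_i=\mathbf{1}] = \sum_{j=i}^{K} p_j$, which is non-increasing in $i$ because $\rho_{i+1}=\rho_i-p_i$ with $p_i\ge 0$. Introducing the Bernoulli indicator $B_i := \mathbb{I}[M_i=\mathbf{1}]\sim\mathcal{B}(\rho_i)$, independent of the data $(Y,\tilde Z_i)$ since the mask is an extrinsic source of randomness, I would write $Z_i = B_i\,\tilde Z_i$ and evaluate $I(Y;Z_i)$ by conditioning on $B_i$. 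Expanding $I(Y;Z_i,B_i)$ by the chain rule two ways gives, on one hand, $I(Y;B_i)+I(Y;Z_i\mid B_i)=\rho_i\,I(Y;\tilde Z_i)$ (using $B_i\perp Y$, that $B_i=1$ leaves the law of $(Y,\tilde Z_i)$ unchanged, and that $B_i=0$ forces $Z_i$ constant), and on the other hand $I(Y;Z_i)+I(Y;B_i\mid Z_i)$. Combining with \eqref{eq::IYH}, which makes $I(Y;\tilde Z_i)\equiv c$ a constant, I obtain $I(Y;Z_i)=\rho_i c - I(Y;B_i\mid Z_i)$, and the ordering $\rho_1\ge\cdots\ge\rho_K$ then drives the desired chain of inequalities once the correction term vanishes.

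The main obstacle is precisely the correction term $I(Y;B_i\mid Z_i)$. It is zero when $B_i$ is almost surely recoverable from $Z_i$, i.e. when $\Pr[\tilde Z_i=0]=0$ so that $\{Z_i=0\}$ is equivalent to $\{B_i=0\}$, in which case $I(Y;Z_i)=\rho_i c$ and \eqref{eq::IYZ} follows immediately from the monotonicity of $\rho_i$. If $\tilde Z_i$ carries an atom at $0$ (as it does after a ReLU), then the event $\{Z_i=0\}$ no longer identifies $B_i$ and the clean identity degrades to an inequality, so I would either impose a non-atomicity hypothesis on the feature coordinates or argue directly that the residual term cannot reverse the monotonicity induced by $\rho_i$. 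The other point needing care is verifying that the single-channel computation depends on the mask only through its marginal law $B_i\sim\mathcal{B}(\rho_i)$ rather than the full nested joint, but this is routine once the conditioning argument is set up.
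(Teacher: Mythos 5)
Your treatment of \eqref{eq::IYH} is essentially identical to the paper's: both arguments promote Assumption~\ref{ass::permu} to permutation invariance of the joint law of $(Y,\tilde Z_1,\ldots,\tilde Z_K)$, marginalize out all but one channel, and conclude $(Y,\tilde Z_i)\overset{D}{=}(Y,\tilde Z_j)$, hence equality of the mutual informations. That part is correct.

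For \eqref{eq::IYZ} there is a genuine gap, and you have located it yourself but not closed it. Writing $c$ for the common value in \eqref{eq::IYH}, your chain-rule identity $I(Y;Z_i)=\rho_i c-I(Y;B_i\mid Z_i)$ yields the claim only when the correction term vanishes, i.e.\ when $\mathrm{P}(\tilde Z_i=0)=0$. The theorem carries no such non-atomicity hypothesis, and in the paper's intended setting (features after ReLU and pooling) an atom at $0$ is exactly what one expects; whenever $\mathrm{P}(\tilde Z_i=0\mid Y=y)$ depends on $y$, the term $I(Y;B_i\mid Z_i)$ is strictly positive and varies with $\rho_i$ in a way your decomposition gives no control over: you would need $(\rho_i-\rho_{i+1})\,c\;\geq\;I(Y;B_i\mid Z_i)-I(Y;B_{i+1}\mid Z_{i+1})$, and nothing in the proposal bounds the right-hand side, so the fallback of "arguing directly that the residual cannot reverse the monotonicity" is precisely the unproven content. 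The paper avoids the issue entirely: it realizes the nested mask as a sequential product of Bernoulli indicators, $Z_k=\big(\prod_{j=1}^{k-1}\varepsilon_j\big)\tilde Z_k$, observes that this produces a Markov chain $Y\to X\to T_k^1\to\cdots\to T_k^K$ (conditional independence holds even on the event $\{T_{k+1}^{k}=0\}$, where the next variable is deterministically $0$, so atoms cause no trouble), and then combines the data processing inequality with \eqref{eq::IYH} via $I(Y;Z_{k+1})=I(Y;T_{k+1}^{k+1})\leq I(Y;T_{k+1}^{k})=I(Y;T_k^k)=I(Y;Z_k)$. The same idea closes your gap with minimal change to your setup: since $I(Y;Z_i)$ depends only on the joint law of $(Y,Z_i)$, you may re-couple the masks as $B_{i+1}\overset{D}{=}B_iC_i$ with $C_i\sim\mathcal{B}(\rho_{i+1}/\rho_i)$ independent of everything (legitimate since $\rho_i>0$ by \eqref{eq::CatGaussian}); then $(Y,Z_{i+1})\overset{D}{=}(Y,C_iZ_i)$ by \eqref{eq::IYH} and mask independence, and the data processing inequality applied to the chain $Y\to Z_i\to C_iZ_i$ gives $I(Y;Z_{i+1})\leq I(Y;Z_i)$ with no assumption on atoms. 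In short, the Bernoulli bookkeeping in your proposal is correct, but the missing ingredient is exactly the paper's data-processing step, which cannot be replaced by the marginal-probability identity alone.
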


This sorting property is also discussed in \cite{rippel2014learning}, although from the perspective of $I(Y; Z_{1:k}) \leq I(Y; Z_{1:(k+1)})$ with $Z_{1:k}:=[Z_1,Z_2,\ldots,Z_k]$.
In addition to the theoretical analysis, we conduct an experiment on CIFAR-10 \cite{krizhevsky2009learning} using ResNet-18 to verify Theorem \ref{thm::ranking} empirically. 
We plot the empirical estimate of the variational lower bound of $I(Y; Z_k)$, i.e.,~$H(Y | Z_k)$, for $Z$ computed by \eqref{eq::mask} and $Z = \tilde{Z} = f(X)$. 
The comparison is in Fig.~\ref{fig::MI} where the one without Nested Dropout is tagged as baseline. 
A clear information sorting has been achieved comparing to the baseline training.

\begin{figure*}[t]
	\begin{minipage}[t]{0.24\textwidth}  
		\centering 
		\addtocounter{figure}{-1}
		\includegraphics[width=\textwidth]{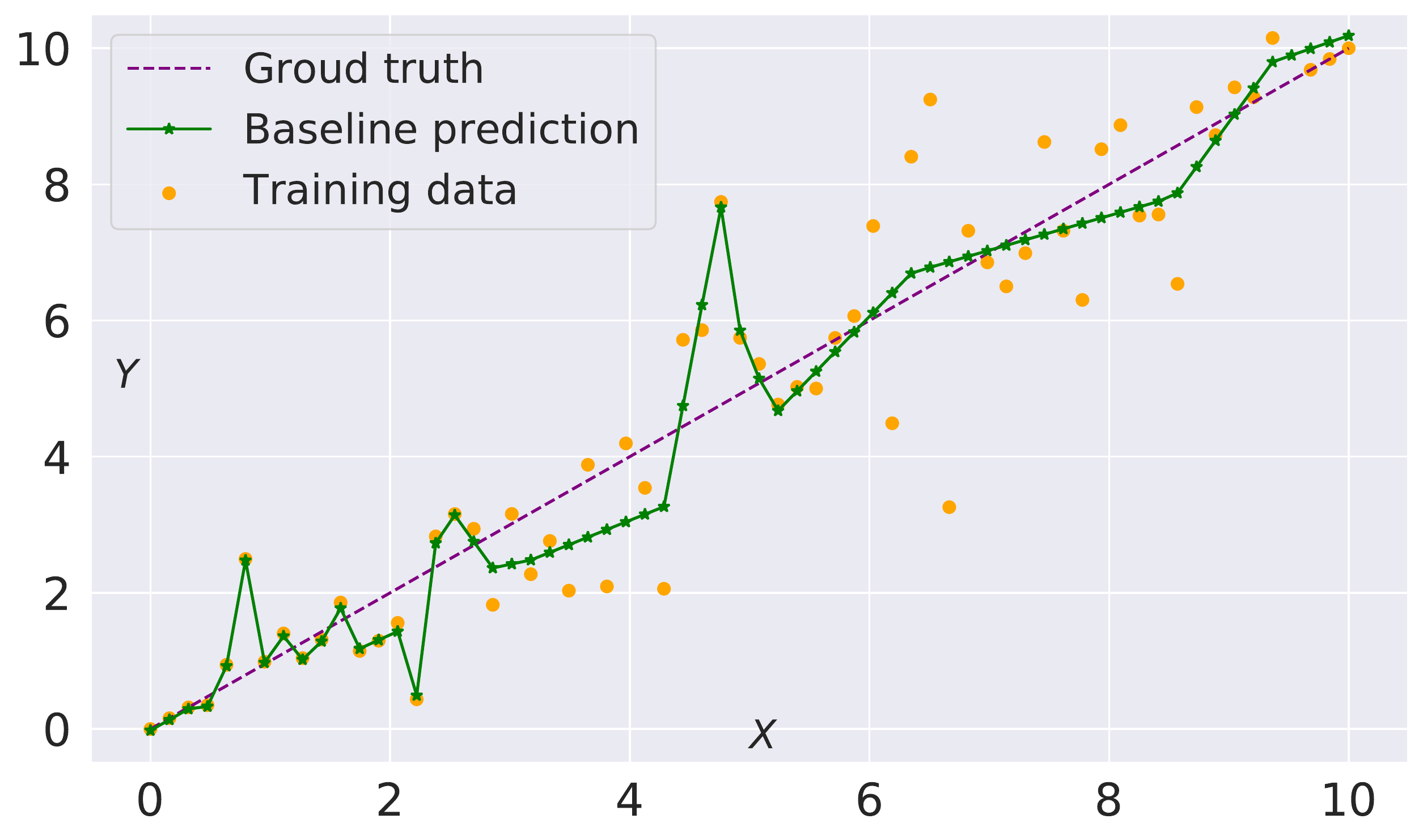}
        \captionsetup{labelformat=empty}
		\caption{(a) MLP}
	\end{minipage}  
	\begin{minipage}[t]{0.24\textwidth}  
		\centering  
		\addtocounter{figure}{-1}
		\includegraphics[width=\textwidth]{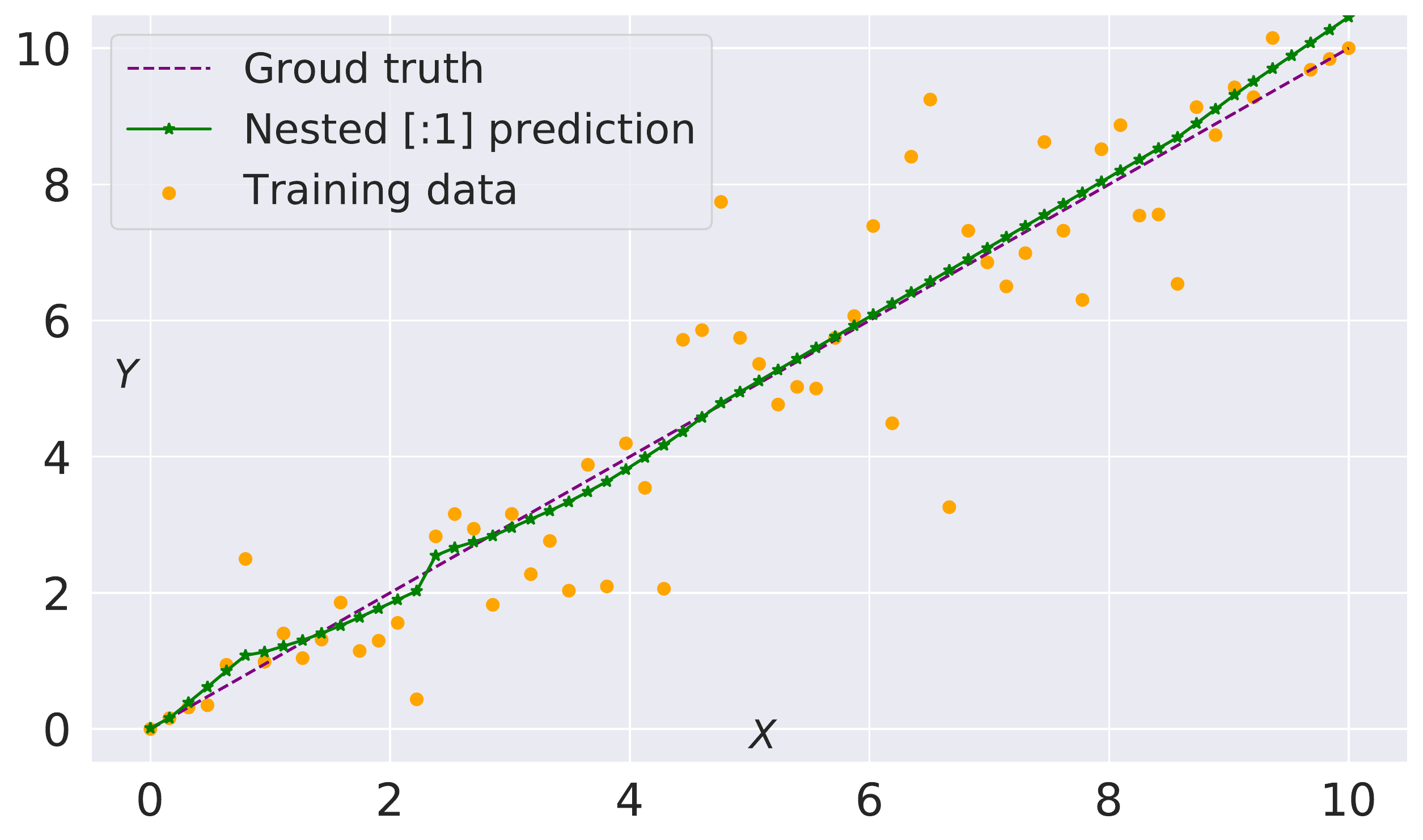} 
		\captionsetup{labelformat=empty}
        \caption{(b) MLP$+$Nested $k$=1}
	\end{minipage} 
	\begin{minipage}[t]{0.24\textwidth}  
		\centering  
		\addtocounter{figure}{-1}
		\includegraphics[width=\textwidth]{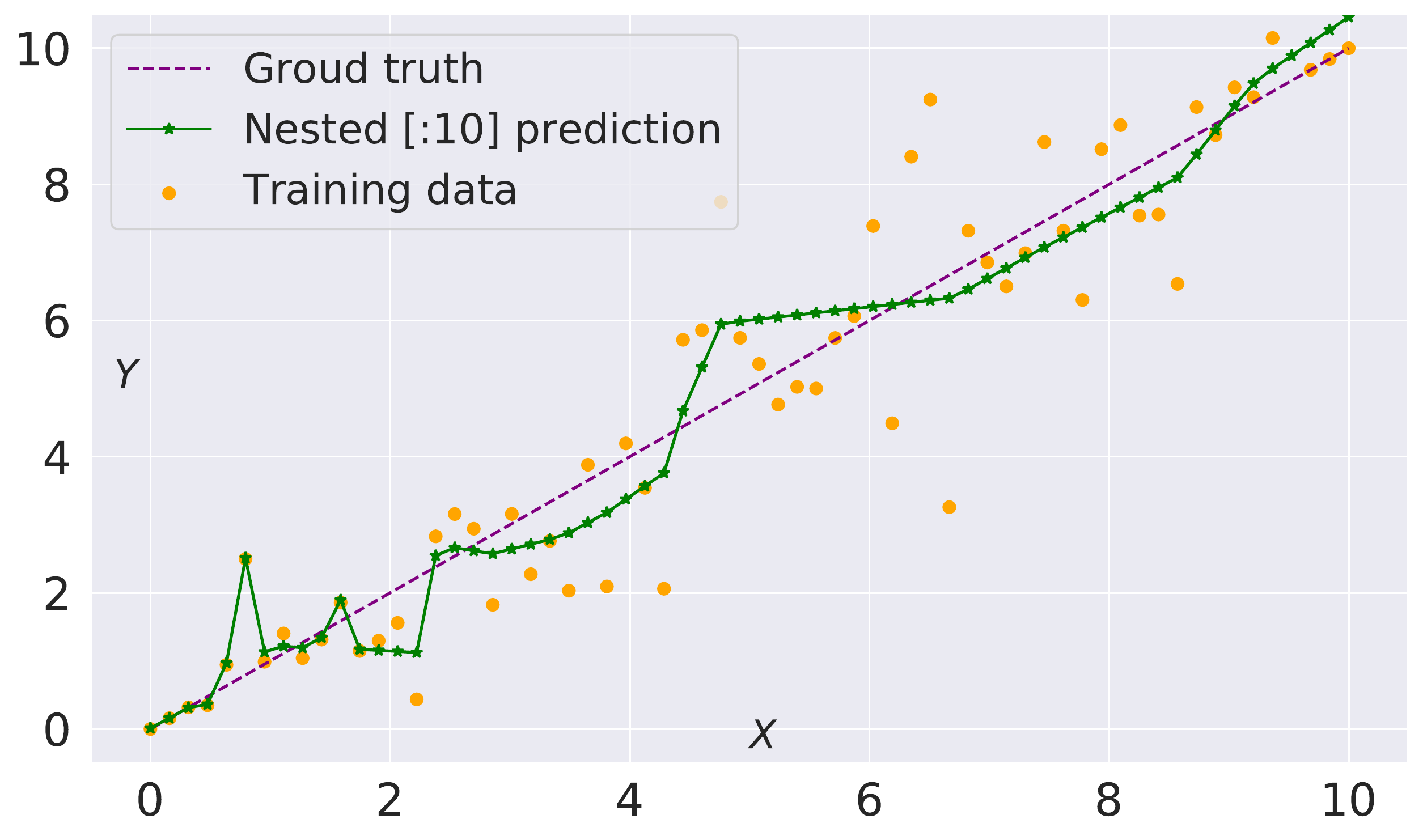} 
		\captionsetup{labelformat=empty}
        \caption{(c) MLP$+$Nested $k$=10}
	\end{minipage} 
	\begin{minipage}[t]{0.24\textwidth}  
		\centering  
		\addtocounter{figure}{-1}
		\includegraphics[width=\textwidth]{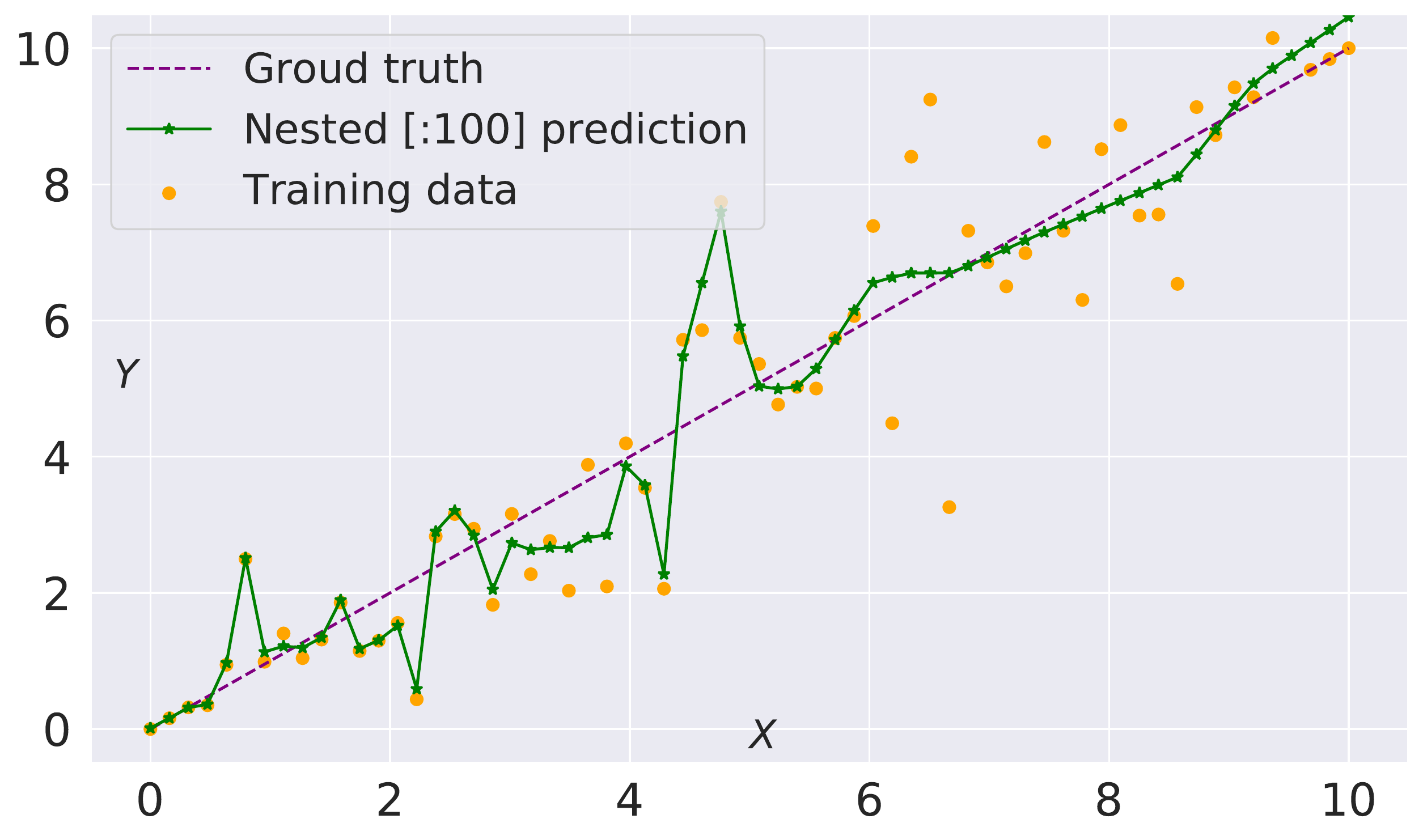}
        \captionsetup{labelformat=empty}
		\caption{(d) MLP$+$Nested $k$=100}
	\end{minipage} 
	\begin{minipage}[t]{0.24\textwidth}  
		\centering  
		\addtocounter{figure}{-1}
		\includegraphics[width=\textwidth]{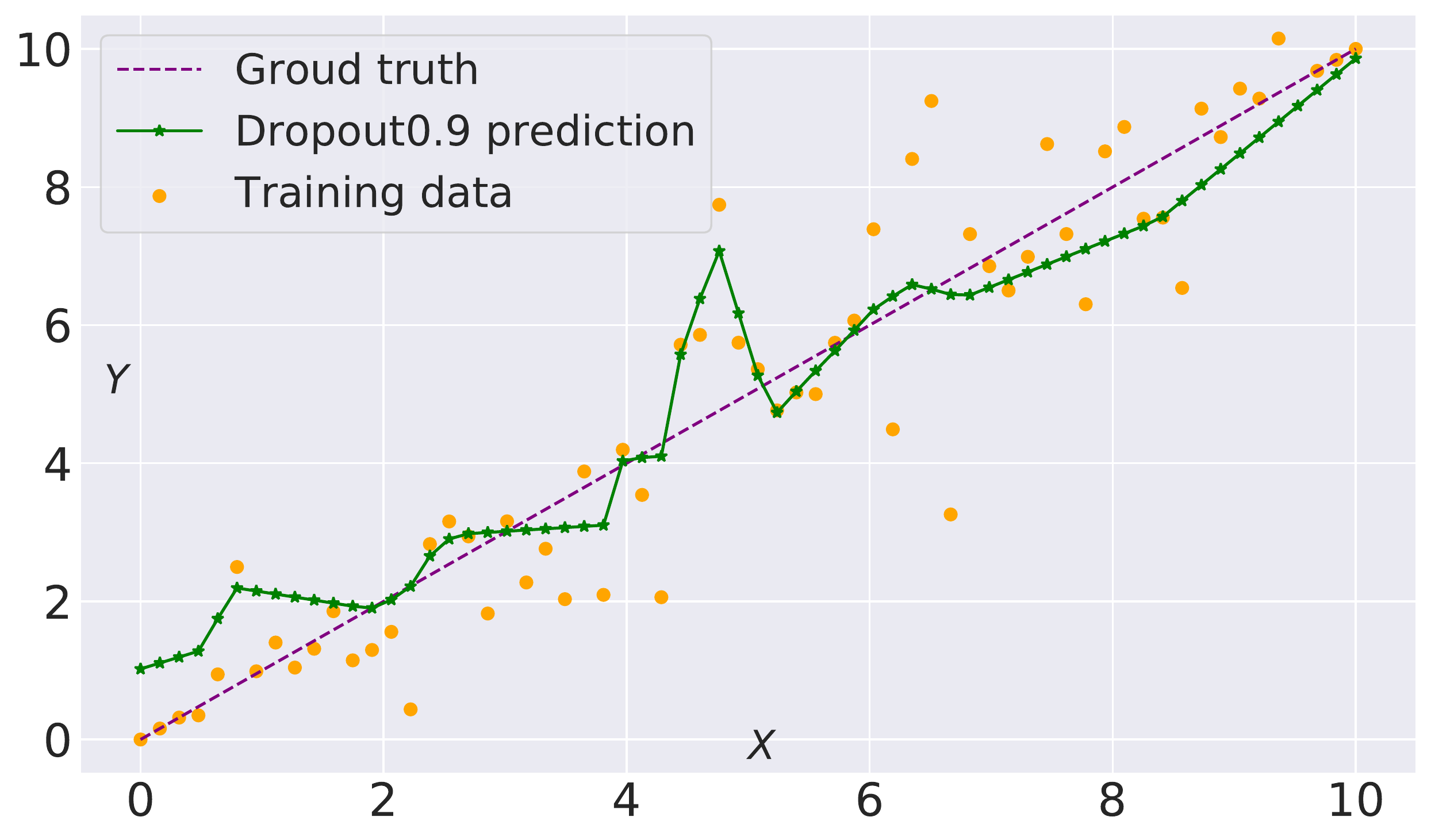}
        \captionsetup{labelformat=empty}
		\caption{(e) MLP$+$Dropout 0.9}
	\end{minipage}
	\begin{minipage}[t]{0.24\textwidth}  
		\centering  
		\addtocounter{figure}{-1}
		\includegraphics[width=\textwidth]{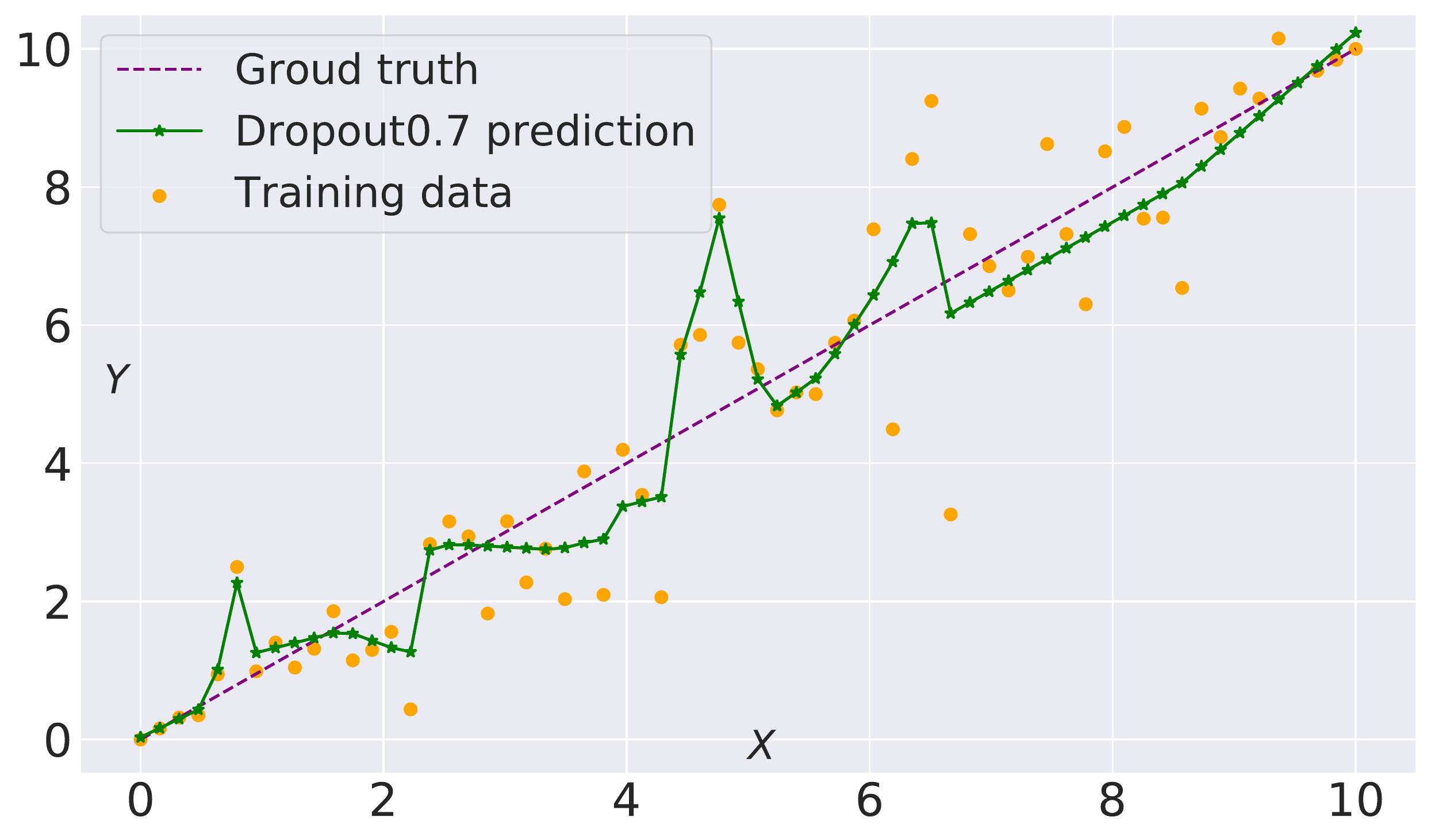}
        \captionsetup{labelformat=empty}
		\caption{(f) MLP$+$Dropout 0.7}
	\end{minipage}
	\begin{minipage}[t]{0.24\textwidth}  
		\centering  
		\addtocounter{figure}{-1}
		\includegraphics[width=\textwidth]{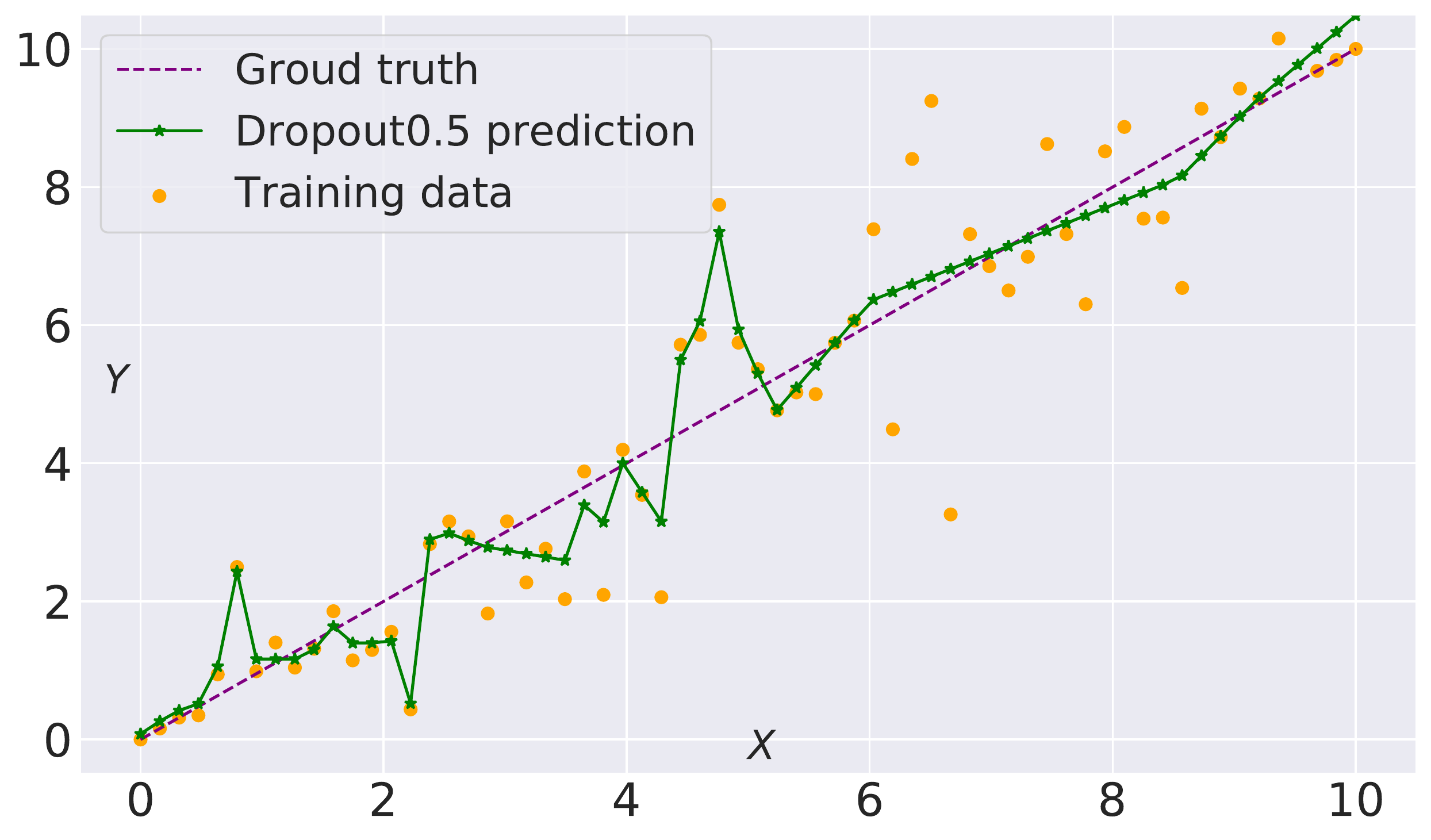}
        \captionsetup{labelformat=empty}
		\caption{(g) MLP$+$Dropout 0.5}
	\end{minipage}
	\begin{minipage}[t]{0.24\textwidth}  
		\centering  
		\addtocounter{figure}{-1}
		\includegraphics[width=\textwidth]{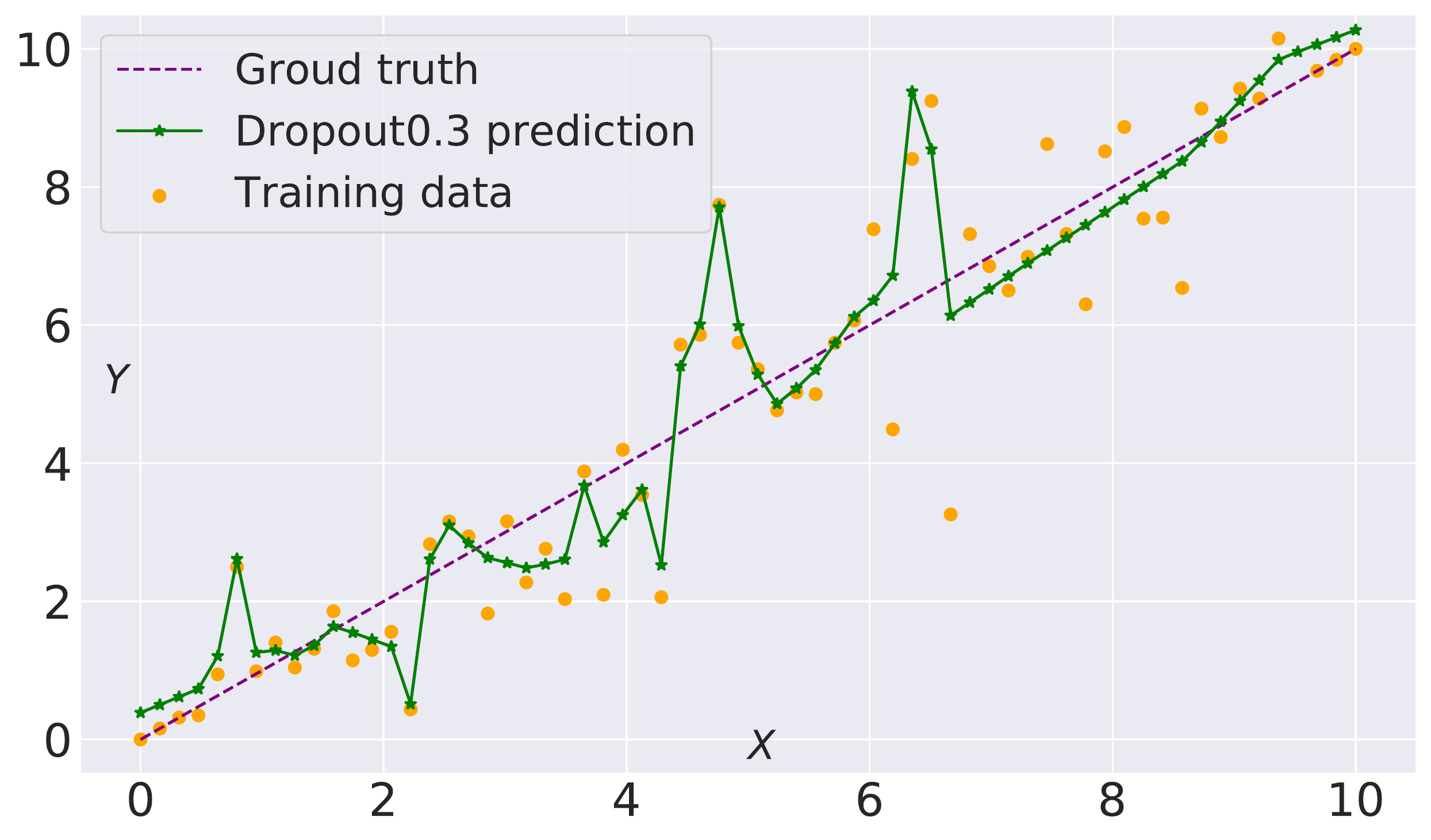}
        \captionsetup{labelformat=empty}
		\caption{(h) MLP$+$Dropout 0.3}
	\end{minipage}
	\centering  
	\caption{Comparisons of regression between standard MLP and MLP trained with Nested Dropout~\cite{rippel2014learning} and Dropout \cite{srivastava2014dropout} on a synthetic noisy label dataset. 
	(a) MLP with standard training; 
	(b-d) predictions of MLP$+$Nested using only the first $k\in\{1,10,100\}$ channels;
	(e-h) predictions of MLP$+$Dropout with drop ratio $p_{\text{drop}}\in\{0.9,0.7,0.5,0.3\}$.}
	\label{fig::ResistNoise}
	\vspace{-3mm}
\end{figure*} 

\subsection{Combination with Co-teaching} \label{subsec::comco}
In this subsection, we consider the stage two in our method where 
two networks are further fine-tuned with Co-teaching.
Recall that during the cross-update state, one network selects its small-loss instances $\mathcal{D}_1$ and send them to its peer.
Intuitively, the above process resembles the teacher and student mechanism where the teacher selects possibly clean instances for the student to learn.
In this regard, let $q_t(y|x)$ be the teacher network, $q(y|x)$ in \eqref{eq::lvm} be the student network.
The sample selection mechanism only preserves those with small loss $-\log q_t(y|x)$, i.e.,~large $q_t(y|x)$.
If we consider this selection w.r.t.~probability together with the following student network training, we reformulate the student's loss as:
\begin{align} \label{eq::co_loss}
    L_q^s (x, y) &:= q_t(y|x) L_q (x, y) 
    \nonumber\\
    & = \mathbb{E}_{q(z|x)} \big[ - q_t(y|x)\log q(y|z) \big]
\end{align}
where $q_t(y|x)$ represents $(x,y)$'s probability to be selected.
Moreover, by regarding sample selection and student network training as a whole, we redefine student network's decoder by
$q_{\text{co}}(y|x,z) \propto \exp[q_t(y|x) \log q(y|z)]$.
In order to distinguish it from the original student network's decoder $q(y|z)$, we call $q_{\text{co}}(y|x,z)$ the taught student decoder.
The following Theorem \ref{thm::co_decomp} gives the bias-variance decomposition when networks are further fine-tuned with Co-teaching.

\begin{theorem} \label{thm::co_decomp}
    Let $q_t(y|x)$ be the Co-teaching teacher network, and $q_{\text{co}}(y|x,z) = \exp[q_t(y|x) \log q(y|z)]/C_1(x,z)$ where $C_1(x,z) :=\int_{\mathcal{Y}}\exp[q_t(y|x) \log q(y|z)]\,dy \leq 1$ be the taught student decoder. 
    For the Co-teaching student loss $L_q^s$ defined in \eqref{eq::co_loss}, the risk has a bias-variance decomposition:
    \begin{align*}
        & \mathbb{E}_{p(x,y)} \big[L_q^s(X, Y)\big] = 
        \mathbb{E}_{p(x)} \Big[ \underbrace{D_{\text{KL}} \big( p(y|x) \| \tilde{q}_{\text{co}}(y|x)\big) }_{\text{bias}}  
        \\
        &  \qquad + \underbrace{\mathbb{E}_{q(z|x)} D_{\text{KL}}
        \big(\tilde{q}_{\text{co}}(y|x) \| q_{\text{co}}(y|x,z) \big)}_{\text{variance}} \Big] + \text{const}
    \end{align*}
    where $\tilde{q}_{\text{co}}(y|x) \propto \exp \big[\mathbb{E}_{q(z|x)}\log q_{\text{co}}(y|x,z) \big]$ is the average or an ensemble of models. 
    Moreover, by defining $\alpha(y|x):=\mathbb{E}_{\tilde{q}(y|x)}\big[\exp[ q_t(y|x)] \big]/ \exp[ q_t(y|x)]$, we then have:
    \begin{itemize}
        \item[\textit{(i)}] If $\alpha(y|x) \leq 1$, then
        \begin{align}
        D_{\text{KL}} \big( p(y|x) \| \tilde{q}_{\text{co}}(y|x)\big) 
        \leq D_{\text{KL}}\big( p(y|x) \| \tilde{q}(y|x) \big).
         \label{eq::ineq_bias} 
        \end{align}
        \item[\textit{(ii)}] If $\alpha(y|x) \leq C_1(x,z)$, then
        \begin{align}
        & \mathbb{E}_{q(z|x)} D_{\text{KL}}
        \big(\tilde{q}_{\text{co}}(y|x) \| q_{\text{co}}(y|x,z) \big)
        \nonumber\\
        & \quad \quad \quad
        \geq \mathbb{E}_{q(z|x)} D_{\text{KL}}\big( \tilde{q}(y|x) \| q(y|z) \big).
        \label{eq::ineq_var}
    \end{align}
    \end{itemize}
\end{theorem}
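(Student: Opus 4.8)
The plan is to reduce the decomposition to Theorem~\ref{thm::decomp} by reading $q_{\text{co}}(y|x,z)$ as the decoder of an auxiliary latent variable model, and then to obtain the two inequalities from a single structural identity relating the two ensemble models $\tilde{q}_{\text{co}}$ and $\tilde{q}$. This keeps the heavy lifting inside the machinery already established for the single-model case.

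First I would rewrite the Co-teaching student loss in a form to which the earlier decomposition applies verbatim. Since $\log q_{\text{co}}(y|x,z) = q_t(y|x)\log q(y|z) - \log C_1(x,z)$ and $C_1(x,z)$ does not depend on $y$, we have $L_q^s(x,y) = \mathbb{E}_{q(z|x)}[-\log q_{\text{co}}(y|x,z)] - \mathbb{E}_{q(z|x)}[\log C_1(x,z)]$, where the second summand is a function of $x$ alone. Taking $\mathbb{E}_{p(x,y)}$ and running the proof of Theorem~\ref{thm::decomp} with $q_{\text{co}}$ playing the role of the decoder $q(y|z)$ and $\tilde{q}_{\text{co}}$ the role of $\tilde{q}$ yields the bias term $D_{\text{KL}}(p(y|x)\|\tilde{q}_{\text{co}}(y|x))$ and the variance term $\mathbb{E}_{q(z|x)}D_{\text{KL}}(\tilde{q}_{\text{co}}(y|x)\|q_{\text{co}}(y|x,z))$; the leftover $-\mathbb{E}_{q(z|x)}\log C_1$ together with $H_p(Y|X)$ is folded into the constant. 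The one fact I rely on here is that, for any decoder, the log-ratio $\log\tilde{q}_{\text{co}}(y|x) - \mathbb{E}_{q(z|x)}\log q_{\text{co}}(y|x,z)$ is constant in $y$ (it equals minus the log-normalizer of $\tilde{q}_{\text{co}}$), so the variance term collapses to that normalizer and the expectation against $p(y|x)$ may freely be swapped for one against $\tilde{q}_{\text{co}}$.

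The pivotal step for parts \textit{(i)}--\textit{(ii)} is the identity $\tilde{q}_{\text{co}}(y|x) = \tilde{q}(y|x)/\alpha(y|x)$. I would prove it by unfolding the definitions: $\tilde{q}_{\text{co}}(y|x) \propto \exp[\mathbb{E}_{q(z|x)}\log q_{\text{co}}(y|x,z)]$, and after discarding the $y$-independent $\log C_1$ contribution this collapses to the teacher-tilted ensemble $\propto \exp[q_t(y|x)]\,\tilde{q}(y|x)$; normalizing by $\int \exp[q_t(y'|x)]\tilde{q}(y'|x)\,dy' = \mathbb{E}_{\tilde{q}(y|x)}[\exp q_t]$ gives exactly $\tilde{q}(y|x)\exp[q_t(y|x)]/\mathbb{E}_{\tilde{q}}[\exp q_t] = \tilde{q}(y|x)/\alpha(y|x)$, and I would verify $\int \tilde{q}/\alpha\,dy = 1$ to confirm it is a genuine distribution.

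Given this identity, part \textit{(i)} is immediate: $D_{\text{KL}}(p\|\tilde{q}_{\text{co}}) - D_{\text{KL}}(p\|\tilde{q}) = \mathbb{E}_{p(y|x)}\log\alpha(y|x)$, which is $\le 0$ whenever $\alpha(y|x)\le 1$, giving \eqref{eq::ineq_bias}. For part \textit{(ii)} I would compute the two variance terms through their log-normalizers, so that their difference reduces to a comparison between $\mathbb{E}_{q(z|x)}\log C_1(x,z)$ and $\log \mathbb{E}_{\tilde{q}}[\exp q_t]$, and the hypothesis $\alpha(y|x)\le C_1(x,z)$ is precisely what forces this difference to be $\ge 0$, yielding \eqref{eq::ineq_var}. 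I expect the main obstacle to be the bookkeeping of the two distinct normalizers ($C_1$ from the decoder and $C_{\text{co}}$ from the ensemble) together with the subtler point that the bias comparison in \textit{(i)} and the variance comparison in \textit{(ii)} are taken against \emph{different} reference measures ($\tilde{q}$ versus $\tilde{q}_{\text{co}}$); reconciling the exponential-tilt form of $q_{\text{co}}$ with the multiplicative form built into the definition of $\alpha$ is where the argument must be handled with care, and it is exactly there that the two sufficient conditions $\alpha\le 1$ and $\alpha\le C_1$ enter.
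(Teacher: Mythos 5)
Your reduction of the decomposition and your part \textit{(i)} coincide with the paper's proof. The paper likewise writes $L_q^s(x,y)=\mathbb{E}_{q(z|x)}[-\log q_{\text{co}}(y|x,z)]-\mathbb{E}_{q(z|x)}[\log C_1(x,z)]$, treats $q_{\text{co}}$ as the decoder, folds the $C_1$ term into the constant, and applies the same machinery as Theorem~\ref{thm::decomp} (Lemma~\ref{lm::decomp}); your observation that $\log\tilde{q}_{\text{co}}(y|x)-\mathbb{E}_{q(z|x)}\log q_{\text{co}}(y|x,z)$ is constant in $y$ is exactly what drives that lemma. For \eqref{eq::ineq_bias} the paper also uses the tilt identity $\tilde{q}_{\text{co}}=\tilde{q}/\alpha$ and the sign of $\log\alpha$, just as you do. (Note that both you and the paper obtain that identity by the same chain of proportionalities, which silently replaces $\exp\big[q_t(y|x)\,\mathbb{E}_{q(z|x)}\log q(y|z)\big]$ by $\exp[q_t(y|x)]\cdot\exp\big[\mathbb{E}_{q(z|x)}\log q(y|z)\big]$ up to a factor constant in $y$; this leap is inherited from the paper rather than introduced by you.)

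The genuine gap is in your part \textit{(ii)}, where you depart from the paper. You rely simultaneously on (a) the normalizer collapse $\mathbb{E}_{q(z|x)}D_{\text{KL}}\big(\tilde{q}_{\text{co}}\|q_{\text{co}}\big)=-\log\int_{\mathcal{Y}}\exp\big[\mathbb{E}_{q(z|x)}\log q_{\text{co}}(y|x,z)\big]\,dy$ and (b) the identity $\tilde{q}_{\text{co}}=\tilde{q}/\alpha$. Fact (a) is valid only when $\tilde{q}_{\text{co}}$ is the normalized exponential of $\mathbb{E}_{q(z|x)}\log q_{\text{co}}(y|x,z)$, i.e. $\tilde{q}_{\text{co}}\propto\exp[q_t(y|x)\,G(y)]$ with $G(y):=\mathbb{E}_{q(z|x)}\log q(y|z)$, whereas (b) asserts $\tilde{q}_{\text{co}}\propto\exp[q_t(y|x)]\exp[G(y)]$; these are different functions of $y$, so the two facts cannot be used together, and without (b) the difference of log-normalizers does not reduce to comparing $\mathbb{E}_{q(z|x)}\log C_1(x,z)$ with $\log\mathbb{E}_{\tilde{q}(y|x)}[\exp q_t(y|x)]$. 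Moreover, even granting the reduction, the hypothesis $\alpha(y|x)\le C_1(x,z)$ only unpacks to $\log\mathbb{E}_{\tilde{q}(y|x)}[\exp q_t(y|x)]\le\log C_1(x,z)+q_t(y|x)$, and averaging over $z$ leaves the spurious non-negative $q_t(y|x)$ on the right-hand side; it does not yield $\log\mathbb{E}_{\tilde{q}(y|x)}[\exp q_t(y|x)]\le\mathbb{E}_{q(z|x)}\log C_1(x,z)$, which is what your comparison actually needs, so the hypothesis is not ``precisely'' the condition you claim. The paper avoids both issues by never collapsing the co-teaching variance to a normalizer: it keeps $\tilde{q}_{\text{co}}=\tilde{q}/\alpha$ inside the KL integral, bounds the decoder pointwise via $\exp[q_t(y|x)\log q(y|z)]\le q(y|z)$ (argued from $0\le q_t\le 1$), and then invokes $\alpha(y|x)\le C_1(x,z)$ pointwise in the integrand to arrive at $\mathbb{E}_{q(z|x)}D_{\text{KL}}\big(\tilde{q}(y|x)\|q(y|z)\big)$. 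To close your proof of \eqref{eq::ineq_var} you would need to switch to this pointwise route.
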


\begin{remark} \label{rk::co_bv}
    The condition $\alpha(y|x) \leq 1$ equals to $q_t(y|x) \geq \log\big[\mathbb{E}_{\tilde{q}(y|x)}\exp[ q_t(y|x)]\big]$ where the right-hand side measures the difference between a single training network $\tilde{q}(y|x)$ and the teacher network $q_t(y|x)$ of Co-teaching. 
    The larger the difference, the smaller the value, and vise versa.
    Hence, to obtain smaller bias term than in \eqref{eq::bvDecompo}, the sample selection of Co-teaching only chooses those $(x,y)$ with large $q_t(y|x)$ so as to meet the condition.
    As $C_1(x,z)\leq 1$ by definition, if further $\alpha(y|x) \leq C_1(x,z)$, i.e.,~$q_t(y|x) \geq \log\big[\mathbb{E}_{\tilde{q}(y|x)}\exp[ q_t(y|x)]/C_1(x,z)\big]$ with larger right-hand side value, then we will have larger variance term than that in \eqref{eq::bvDecompo}.
\end{remark}

Theorem \ref{thm::co_decomp} and Remark \ref{rk::co_bv} together demonstrate that choosing samples with large $q_t(y|x)$ during selection of Co-teaching leads to smaller bias term and larger variance term than those in \eqref{eq::bvDecompo}.
That is, the impact of the bias term can be even diminished.
Consequently, the sample selection mechanism during Co-teaching's cross-update process helps in further preventing networks from over-fitting on noisy labels, thus achieving better performance on clean datasets.

\section{Experiments} \label{sec::experiments}
In this section, we present our experimental results. 
First of all, we focus on how Dropout and Nested Dropout cope with the regression noise by a toy example in Section~\ref{sec:toy}. 
For better understanding of our methods, we assess them on real datasets albeit with synthetic label noise in Section \ref{subsec::synthetic}.
In Section~\ref{sec:compare}, we compare our method with the state-of-the-art methods on two real-world datasets: Clothing1M~\cite{xiao2015learning} and ANIMAL-10N~\cite{song2019selfie}. 
Finally, we conduct ablation study on ANIMAL-10N and Clothing1M in Section~\ref{sec::ablation}. 

\begin{table*}[t]
    \centering
    \renewcommand{\arraystretch}{1.1}
    \caption{
    Test accuracy (\%) of state-of-the-art methods under (a) symmetric noise on CIFAR-10 \cite{krizhevsky2009learning} and CIFAR-100 \cite{krizhevsky2009learning}, (b) 40\% asymmetric noise on CIFAR-10. 
    All approaches are implemented with PreAct ResNet-18~\cite{he2016identity} architecture.}
    \begin{minipage}{0.7\textwidth}
        \centering
        \scalebox{0.95}{\subfloat[Symmetric noise on CIFAR-10 and CIFAR-100.]{
        \begin{tabular}{ccccccccc}
            \hline\hline
            \multicolumn{1}{c|}{\multirow{2}{*}{Methods / Noise Ratio (\%)}}
            & \multicolumn{4}{c|}{CIFAR-10}              
            & \multicolumn{4}{c}{CIFAR-100}                          
            \\
            \multicolumn{1}{c|}{}   
            & \multicolumn{1}{c|}{\, 20\%\,} 
            & \multicolumn{1}{c|}{\, 50\%\,} 
            & \multicolumn{1}{c|}{\, 80\%\,} 
            & \multicolumn{1}{c|}{\, 90\%\,}         
            & \multicolumn{1}{c|}{\, 20\%\,} 
            & \multicolumn{1}{c|}{\, 50\%\,} 
            & \multicolumn{1}{c|}{\, 80\%\,} 
            & {\, 90\%\,}
            \\ \hline
            \multicolumn{1}{c|}{Cross-Entropy~\cite{li2020dividemix}} 
            & 86.8 & 79.4 & 62.9 & \multicolumn{1}{c|}{42.7} & 62.0 & 46.7 & 19.9 & 10.1
            \\
            \multicolumn{1}{c|}{Bootstrap~\cite{reed2014training}}
            & 86.8 & 79.8 & 63.3 & \multicolumn{1}{c|}{42.9} & 62.1 & 46.6 & 19.9 & 10.2
            \\
            \multicolumn{1}{c|}{F-correction~\cite{patrini2017making}}  & 86.8 & 79.8 & 63.3 & \multicolumn{1}{c|}{42.9} & 61.5 & 46.6 & 19.9 & 10.2          
            \\
            \multicolumn{1}{c|}{Co-teaching+~\cite{yu2019does,li2020dividemix}}
            & 89.5 & 85.7 & 67.4 & \multicolumn{1}{c|}{47.9} & 65.6 & 51.8 & 27.9 & 13.7         
            \\
            \multicolumn{1}{c|}{Mixup~\cite{zhang2018mixup}}  
            & 95.6 & 87.1 & 71.6 & \multicolumn{1}{c|}{52.2} & 67.8 & 57.3 & 30.8 & 14.6               
            \\
            \multicolumn{1}{c|}{PENCIL~\cite{yi2019probabilistic,li2020dividemix}}
            & 92.4 & 89.1 & 77.5 & \multicolumn{1}{c|}{58.9} & 69.4 & 57.5 & 31.1 & 15.3          
            \\
            \multicolumn{1}{c|}{MLNT~\cite{li2019learning,li2020dividemix}}  
            & 92.9 & 89.3 & 77.4 & \multicolumn{1}{c|}{58.7} & 68.5 & 59.2 & 42.4 & 19.5     
            \\
            \multicolumn{1}{c|}{M-correction~\cite{arazo2019unsupervised}}        
            & 94.0 & 92.0 & 86.8 & \multicolumn{1}{c|}{69.1} & 73.9 & 66.1 & 48.2 & 24.3       
            \\
            \multicolumn{1}{c|}{DivideMix~\cite{li2020dividemix}}       & 96.1 & 94.6 & 93.2 & \multicolumn{1}{c|}{76.0} & 77.3 & 74.6 & 60.2 & 31.5      
            \\ \hline
            \multicolumn{9}{c}{\bf Ours}                                
            \\
            \multicolumn{1}{c|}{Nested$+$Co-teaching}              
            & 95.3 & 91.9 & 78.8 & \multicolumn{1}{c|}{55.0} & 77.5 & 66.7 & 43.0 & 14.2
            \\
            \multicolumn{1}{c|}{Dropout$+$Co-teaching}              
            & 95.0 & 90.2 & 78.8 & \multicolumn{1}{c|}{56.0} & 76.7 & 67.0 & 44.1 & 18.4
            \\ \hdashline
            \multicolumn{9}{c}{\bf Pre-Cleaning with DivideMix}      
            \\
            \multicolumn{1}{c|}{M-correction} 
            & 94.5 & 93.5 & 92.6 & \multicolumn{1}{c|}{73.5} & 66.4 & 63.8 & 54.0 & 29.1
            \\
            \multicolumn{1}{c|}{Nested$+$Co-teaching}  & 96.0 & 94.9 & \bf 93.5 & \multicolumn{1}{c|}{\bf 78.3} & 78.3 & 76.0 & \bf 63.6 & 36.1
            \\
            \multicolumn{1}{c|}{Dropout$+$Co-teaching }
            & \bf 96.1 & \bf 95.0 & 93.4 & \multicolumn{1}{c|}{78.0} & \bf 79.5 & \bf 76.8 & 63.2 & \bf 37.4
            \\ 
            \hline\hline
        \end{tabular}}}
    \end{minipage}\hfill
    \begin{minipage}{0.3\textwidth}
        \centering
        \scalebox{0.95}{\subfloat[40\% asym.~noise on CIFAR-10.]{
        \begin{tabular}{c|c}
            \hline\hline
            Methods  & Acc. (\%)   
            \\ \hline
            Cross-Entropy~\cite{li2020dividemix} & 85.0 
            \\ 
            F-correction~\cite{patrini2017making,li2020dividemix} & 87.2 
            \\
            M-correction~\cite{arazo2019unsupervised,li2020dividemix} & 87.4 \\
            Iterative-CV~\cite{chen2019understanding,li2020dividemix} & 88.6 \\
            PENCIL~\cite{yi2019probabilistic,li2020dividemix} & 88.5 
            \\
            JO~\cite{tanaka2018joint,li2020dividemix} & 88.9 
            \\
            MLNT~\cite{li2019learning,li2020dividemix} & 89.2 
            \\ 
            DivideMix~\cite{li2020dividemix} & 93.4 
            \\ \hline
            \multicolumn{2}{c}{\bf Ours} 
            \\
            Nested$+$Co-teaching & 93.0 
            \\
            Dropout$+$Co-teaching & 92.9 
            \\ \hdashline
            \multicolumn{2}{c}{\bf Pre-Cleaning with DivideMix} 
            \\
            MLNT & 92.5 
            \\
            Nested$+$Co-teaching & \bf 94.2 
            \\
            Dropout$+$Co-teaching & 93.8 
            \\ \hline\hline
        \end{tabular}} 
        \label{tab::cifar}}
    \end{minipage}
    \hfill
    \vspace{-5mm}
\end{table*}

\subsection{Toy example: a simple regression with noise}
\label{sec:toy}
This subsection provides an intuitive better understanding on the reason why Nested Dropout~\cite{rippel2014learning} and Dropout~\cite{srivastava2014dropout} are able to resist label noise.
To this end, we give a simulated regression experiment. 
Specifically, we generate a dataset of noisy observations from $y_i = x_i + \epsilon_i$ for $i=1,\ldots,64$ where $x_i$ is evenly spaced between $[0,10]$ and $\epsilon_i \sim \mathcal{N}(0,1)$ are i.i.d~sampled. 
We employ a multilayer perceptron (MLP) consisting of three linear layers with input and output dimensions being $1 \rightarrow 64 \rightarrow 128 \rightarrow 1$.
Moreover, we add ReLU activations to all layers except the last one.
When training model with Nested Dropout/Dropout, we only apply it to the last layer of the MLP, and the corresponding model is denoted by MLP$+$Nested, MLP$+$Dropout, respectively. 
Note that we follow \eqref{eq::CatGaussian} where $\sigma_{\text{nest}} = 200$.
Fig.~\ref{fig::ResistNoise} gives the results after $100$k epochs.
The drop ratio $p_{\text{drop}}$ of Dropout varies in $\{0.9,0.7,0.5,0.3\}$ where the compression ratio decreases. 
As in Fig.~\ref{fig::ResistNoise}, MLP overfits the label noise while MLP$+$Nested with the first $k=1$, $k=10$ channels recover the ground-truth $y=x$ better. 
Nevertheless, MLP$+$Nested gradually overfits the label noise due to over parameterization as the number of channels increases.
As for MLP$+$Dropout, with $p_{\text{drop}}$ decreasing, the models become over-fitting the noisy labels.
However, MLP$+$Nested with $k=1$ still gives the best performance.
To conclude, both compression methods prevent networks from over-fitting the noisy patterns. 
Notably, for MLP$+$Nested, the main data structure information is contained in the first few channels, while noisy information is likely to be encoded in channels towards the end.

\subsection{Model analysis on synthetic noise} 
\label{subsec::synthetic}

\subsubsection{Datasets}
We evaluate our methods on CIFAR-10 \cite{krizhevsky2009learning} and CIFAR-100 \cite{krizhevsky2009learning} with synthetic label noise following ~\cite{tanaka2018joint,li2020dividemix,li2019learning}.
For the training data, we manually corrupt the label according to a transition matrix $Q$ with
$Q_{ij} = \mathbb{P}(y = j | y_{\text{clean}} = i)$, $i, j \in \{1,\ldots,C\}$ denoting the probability of flipping clean $ y_{\text{clean}}$ to noisy $y$.
One representative structure of the matrix $Q$ is the symmetric flipping \cite{van2015learning}, that is, $\mathrm{P}(y=i | y_{\text{clean}}=i) = 1-\tau$, $\mathrm{P}(y \neq i | y_{\text{clean}}=i) = \tau / (C-1)$ where $C$ is the number of classes and $\tau$ is called the noise ratio.
The other representative structure is the asymmetric (or pair) flipping \cite{patrini2017making} where label mistakes only happen within very similar classes, therefore should be tailored for different datasets. For example, in CIFAR-10, the asymmetric flippings follow: truck $\to$ auto-mobile, bird $\to$ airplane, deer $\to$ horse and cat $\leftrightarrow$ dog. The probability is $\tau$ for flipping from ground-truth to inaccurate class, while $1 - \tau$ for remaining uncorrupted. 

\begin{table}[t]
    \caption{Test accuracy (\%) of our Nested$+$Co-teaching and Dropout$+$Co-teaching on CIFAR-10 and CIFAR-100~\cite{krizhevsky2009learning} under symmetric noise.
    Methods based on ImageNet~\cite{deng2009imagenet} pre-trained models are marked with ``\cmark".
    All approaches are implemented with ResNet-18~\cite{he2016deep} architecture.}
    \label{tab::cifar_imgnet}
    \centering
    \resizebox{0.99\columnwidth}{!}{
    \begin{tabular}{c|c|ccc|ccc}
        \hline\hline
        \multirow{2}{*}{
        \begin{tabular}[c]{@{}c@{}}{Methods /} \\ Noise Ratio\end{tabular}} 
        & \multirow{2}{*}{
        \begin{tabular}[c]{@{}c@{}}{Pre-} \\ trained\end{tabular}} 
        & \multicolumn{3}{c|}{CIFAR-10} & \multicolumn{3}{c}{CIFAR-100} 
        \\
        & & \multicolumn{1}{c|}{20\%} 
        & \multicolumn{1}{c|}{50\%} 
        & 80\% 
        & \multicolumn{1}{c|}{20\%} 
        & \multicolumn{1}{c|}{50\%} & 80\%
        \\ \hline
        \multirow{2}{*}{\begin{tabular}[c]{@{}c@{}}{Nested$+$}\\ {Co-teaching}\end{tabular}}   
        & \xmark & 91.7 & 86.9 & 53.4 & 69.0 & 60.6 & 28.0
        \\
        & \cmark & \bf 92.9 & 87.8 & 72.2 & 71.4 & 63.3 & 36.4 
        \\ \hline
        \multirow{2}{*}{\begin{tabular}[c]{@{}c@{}}{Dropout$+$}\\ {Co-teaching}\end{tabular}}  
        & \xmark & 91.9 & 86.6 & 59.0 & 72.4 & 62.4 & 33.2 
        \\
        & \cmark & 91.9 & \bf 89.7 & \bf 77.9 & \bf 74.2 & \bf 65.0 & \bf 39.8
        \\ \hline\hline
    \end{tabular}}
    \vspace{-5mm}
\end{table}

\subsubsection{Implementation details}
Our methods are implemented with PyTorch.
Following the previous works~\cite{li2020dividemix,arazo2019unsupervised}, experiments on CIFAR-10, CIFAR-100 are with PreAct ResNet-18 \cite{he2016identity} trained from scratch.
In the first stage, we use SGD optimizer with a momentum of 0.9, a weight decay of 1e-4, an initial learning rate of 0.1, and batch size of 128.
We apply learning rate warm-up with $6000$ iterations and the number of epochs is 200 with learning rate decayed by 0.1 at 100 and 150 epochs.
Mixup data augmentation~\cite{zhang2018mixup} is adopted in the first stage for better performance as in~\cite{li2020dividemix,arazo2019unsupervised}.
We apply Dropout/Nested Dropout on the average pooled \textit{Conv5} features.
In the second stage, two well-trained models are set as base models for Co-teaching. 
The initial learning rate is 1e-3 and we still employ SGD as optimizer.
Moreover, $\lambda_{\text{forget}}$ is tuned under different noise ratio, batch norm is frozen and no warm-up is applied.
Models are trained for 100 epochs with the learning rate decayed by 0.1 after 50 epochs.
We set $\sigma_{\text{nest}}=50$, $p_{\text{drop}}=0.5$ for cases on CIFAR-10, and $\sigma_{\text{nest}}=100$, $p_{\text{drop}}=0.3$ on CIFAR-100.
Note that when training with Nested Dropout, we record the optimal number of channels $k^*$ of the model and use only these first $k^*$ channels when testing.

We further show that our methods can also serve as good complementary strategies to other state-of-the-art methods to achieve even better performance.
In particular, we propose an additional data preprocessing step before training our methods, which is named ``Pre-Cleaning".
During this pre-cleaning, for example, we substitute the original labels of the dataset with the predictions of a well-trained DivideMix~\cite{li2020dividemix} model, resulting in a pre-cleaned dataset.
We later train our methods on this pre-cleaned dataset so as to further exceed the performance of DivideMix.
Note that we can employ any state-of-the-art methods to conduct the ``Pre-Cleaning".

\subsubsection{Results on CIFAR-10 and CIFAR-100~\cite{krizhevsky2009learning}}
We compare our two-stage methods with multiple state-of-the-art methods on CIFAR-10 and CIFAR-100 under different types and levels of synthetic label noise in Table \ref{tab::cifar}.
We consider the performance of our methods with and without the ``Pre-Cleaning" step separately.
Without the pre-cleaning step, our simple Nested$+$Co-teaching and Dropout$+$Co-teaching achieve the top-3 performance for all except the extreme 90\% label noise ratio cases.
Moreover, by pre-cleaning with DivideMix, our methods achieve the best performance for all cases.
Note that we also compare with M-correction~\cite{arazo2019unsupervised} and MLNT~\cite{li2019learning} with the pre-cleaning step in Table~\ref{tab::cifar}.
It can be seen that although both M-correction and MLNT improve upon their own results with the help of pre-cleaning, they fail to surpass the performance of DivideMix.
Therefore, they cannot serve as complementary strategies to DivideMix to enhance performance.
In contrast, our Dropout$+$Co-teaching improves upon DivideMix by a maximum of 5.9\% in accuracy under 90\% symmetric noise on CIFAR-100.
We also consider the training and inference time of our methods. 
It takes 2.7 hours for training the complete two-stage model on a single NVIDIA V100 GPU, and both Nested$+$Co-teaching and Dropout$+$Co-teaching take 0.24 milliseconds per image for inference.
In regard of above, our methods not only perform well on their own, but can also serve as effective and efficient complementary strategies to other state-of-the-art methods with only a little extra time.
Moreover, we provide additional insight that by using ImageNet~\cite{deng2009imagenet} pre-trained models as in Table~\ref{tab::cifar_imgnet}, we can achieve even better performance.

\subsection{Comparison with state-of-the-art methods on real datasets}
\label{sec:compare}

\subsubsection{Datasets}
The following experiments are conducted on two real-world datasets with real label noise: Clothing1M~\cite{xiao2015learning} and ANIMAL-10N~\cite{song2019selfie}. 
Clothing1M is a benchmark dataset containing $1$ million clothing images with $14$ categories from online shopping websites, and its overall estimated noise ratio is 38.5\% according to~\cite{xiao2015learning}.
Moreover, this dataset provides $50$k, $14$k and $10$k manually verified clean data for training, validation and testing. 
Note that we do not use the clean training set during training.
In our experiment, we randomly sample a balanced subset that includes $260$k images with $18.5$k images per category, from the noisy training set as in \cite{yi2019probabilistic,zhang2021learning}.
This balanced subset is used as our training set and classification accuracies are reported on the $10$k clean test data. 
We follow the data augmentations in ~\cite{li2020dividemix,liu2020early,arazo2019unsupervised}, which includes Mixup data augmentation~\cite{zhang2018mixup}.
ANIMAL-10N is another benchmark dataset recently proposed by~\cite{song2019selfie}. 
It contains $10$ animal classes with confusing appearance.
There are $50$k training, $5$k testing images, and an estimated label noise rate $8\%$. 
No data augmentation is applied so as to follow the settings in \cite{song2019selfie}.

\begin{table}[t]
    \caption{Test accuracy (\%) of state-of-the-art methods on Clothing1M~\cite{xiao2015learning}. 
    All approaches are implemented with ResNet-50~\cite{he2016deep} architecture.
    Results with ``*" use a balanced subset or a balanced loss.}
    \vspace{-2mm}
    \label{tab::clothing1m}
    \begin{center}
    \begin{tabular}{c|c}
        \hline \hline
        Methods  & {\,Acc. (\%)\,}   
        \\ \hline
        Cross-Entropy~\cite{li2020dividemix} & 69.2 
        \\ 
        F-correction~\cite{patrini2017making} & 69.8 
        \\
        M-correction~\cite{arazo2019unsupervised} & 71.0 
        \\
        JO~\cite{tanaka2018joint} & 72.2 
        \\
        ELR*~\cite{liu2020early} & 72.9 
        \\
        HOC*~\cite{zhu2021clusterability} & 73.4
        \\
        PENCIL*~\cite{yi2019probabilistic} & 73.5
        \\
        MLNT~\cite{li2019learning} & 73.5 
        \\ 
        PLC*~\cite{zhang2021learning} & 74.0
        \\
        C2D*~\cite{zheltonozhskii2022contrast} & 74.6
        \\
        ELR+*~\cite{liu2020early} & 74.8 
        \\
        DivideMix*~\cite{li2020dividemix} & 74.8 
        \\ \hline
        \multicolumn{2}{c}{\bf Ours} 
        \\
        Nested* & 73.2 
        \\
        Nested$+$Co-teaching* & \bf 75.0 
        \\
        Dropout* & 72.9
        \\
        Dropout$+$Co-teaching* & 74.0
        \\ \hline\hline
    \end{tabular}
    \end{center}
    \vspace{-5mm}
\end{table}

\subsubsection{Implementation details}
The following experiments are implemented on PyTorch. 
Experiments on Clothing1M~\cite{xiao2015learning} are with ResNet-50~\cite{he2016deep} pre-trained on ImageNet~\cite{deng2009imagenet} following \cite{yi2019probabilistic,li2020dividemix,li2019learning}. 
Dropout/Nested Dropout is applied right before the linear classifier in the network with $\sigma_{\text{nest}}=250$, $p_{\text{drop}}=0.5$.
First, SGD optimizer is used for stage one model training with momentum $0.9$, weight decay 5e-4, initial learning rate 2e-2, and batch size $96$. 
Learning rate warm-up is utilized for $6000$ iterations in stage one, and model is later trained for $30$ epochs with the learning rate decayed by $0.1$ after the $10$-th epoch.
Second, the two models trained in stage are fine-tuned through Co-teaching in stage two.
SGD optimizer is utilized with the same settings and follows a cosine learning rate decay~\cite{loshchilov2016sgdr} with a maximum learning rate
of 5e-5 and a minimum of 1e-5 and without learning rate warm-up. 
Forget rate $\lambda_{\text{forget}}$ is set to be 0.5, batch norms are frozen, and we train for 10 epochs.
Note that when training with Nested Dropout, 
the optimal number of channels $k^*$ of the model is recorded, and only these first $k^*$ channels are used for testing.

For ANIMAL-10N, VGG-19~\cite{simonyan2014very} is used with batch normalization \cite{ioffe2015batch} as in \cite{song2019selfie}. 
The two Dropout layers in the original VGG-19 architecture are substituted with Nested Dropout when ``Nested" is applied.
SGD optimizer is applied.
For more stable training, we use alternative training strategy for these two layers of Dropout/Nested Dropout.
That is, for each feed-forward, Nested Dropout is either applied to the first or the second layers.
In stage one, following \cite{song2019selfie}, the network is trained for $100$ epochs with initial learning rate $0.1$.
The learning rate is later decayed by $0.2$ at $50$-th and $75$-th epochs.
Moreover,  models are trained with learning rate warm-up for $6000$ iterations. 
In stage two, forget rate $\lambda_{\text{forget}}$ is set to be $0.2$, batch norms are frozen and no warm-up is applied.
The initial learning rate is $4e{-3}$ and is decayed by $0.2$ after the $5$-th epoch with $30$ epochs in total.

\subsubsection{Results on the Clothing1M~\cite{xiao2015learning}} 
We compare our methods to state-of-the-art methods in Table~\ref{tab::clothing1m}. 
Notably, a single model trained with Nested Dropout or Dropout can not only surpass M-correction~\cite{arazo2019unsupervised}, JO~\cite{tanaka2018joint}, ELR~\cite{liu2020early}, but also
achieve comparable performance to HOC~\cite{zhu2021clusterability} and PENCIL~\cite{yi2019probabilistic}. 
The combination of Dropout$+$Co-teaching boosts the performance of a single Dropout model by $1.1\%$.
Moreover, the combination of Nested$+$Co-teaching boosts from $73.2\%$ of a single model to $75.0\%$, achieving the best among all methods.

\subsubsection{Results on the ANIMAL-10N~\cite{song2019selfie}} 
Table~\ref{tab::animal} gives the results on ANIMAL-10N. 
It can be seen that our single Dropout model can achieve comparable performance to SELFIE~\cite{song2019selfie}.
Moreover, the combination with Co-teaching provides a consistent performance boost, which is in line with the results on Clothing1M~\cite{xiao2015learning}. 
Notably, our best performance by using Dropout$+$Co-teaching achieves $84.5\%$ accuracy outperforms recent approach PLC~\cite{zhang2021learning} by $1.1\%$.

\subsection{Ablation study with real label noise}
\label{sec::ablation}
This section provides ablation study of $\sigma_{\text{nest}}$, $p_{\text{drop}}$, and $\lambda_{\text{forget}}$ on ANIMAL-10N~\cite{song2019selfie}. 
Note that same as many state-of-the-art methods~\cite{song2019selfie,li2020dividemix,liu2020early,li2019learning}, our hyper-parameters need to be tuned on a clean validation set.
Moreover, we also evaluate our methods using different backbones on Clothing1M~\cite{xiao2015learning}.

\subsubsection{Ablation on $\sigma_{\text{nest}}$}
As in Table~\ref{tab::ablation} (a), Nested Dropout provides consistent improvement compared to training with standard cross-entropy loss and the performance gain is also robust to the choices of the hyper-parameter $\sigma_{\text{nest}}$. 
Moreover, fine-tuning through Co-teaching provides clear performance boost for all the models. 
We also present the optimal number of channels of each model (entry ``$k^*$"). 
Although there are two layers of Nested Dropout applied to the classifier of VGG-19, the optimal number of channels $k^*$ is recorded w.r.t.~the last Nested Dropout layer for simplicity.
Interestingly, models trained with Nested Dropout achieve better performance with only less than $1$\% of channels comparing to their counterparts with cross-entropy loss.

\begin{table}[t]
    \caption{Average test accuracy (\%) with standard deviation (3 runs) of state-of-the-art methods on ANIMAL-10N~\cite{song2019selfie}. All approaches are implemented with VGG-19~\cite{simonyan2014very} architecture.
    Results with ``*" use two networks for training.}
    \vspace{-2mm}
    \label{tab::animal}
    \begin{center}
    \begin{tabular}{c|c}
        \hline\hline
        Methods  & Acc. (\%)   
        \\ \hline
        Cross-Entropy~\cite{song2019selfie} & 79.4 {$\pm$} 0.1
        \\
        Co-teaching*~\cite{han2018co,song2019selfie} & 80.2 {$\pm$} 0.1
        \\
        ActiveBias~\cite{chang2017active,song2019selfie} & 80.5 $\pm$ 0.3
        \\
        SELFIE~\cite{song2019selfie} & 81.8 {$\pm$} 0.1
        \\
        PLC~\cite{zhang2021learning} & 83.4 {$\pm$} 0.4
        \\ \hline
        \multicolumn{2}{c}{\bf Ours} 
        \\
        Nested & 81.3 {$\pm$} 0.6
        \\
        Nested$+$Co-teaching* & 84.1 {$\pm$} 0.1
        \\
        Dropout & 81.6 {$\pm$} 0.2
        \\
        Dropout$+$Co-teaching* & \bf 84.5 {$\pm$} 0.1
        \\ \hline\hline
    \end{tabular}
    \end{center}
    \vspace{-5mm}
\end{table}

\subsubsection{Ablation on $p_{\text{drop}}$}
We experiment on different $p_{\text{drop}}$ with results given in Table \ref{tab::ablation} (a).
Note that results with $p_{\text{drop}}\geq 0.5$ are not given in the table since the training of single VGG-19 fails on ANIMAL-10N \cite{song2019selfie}.
The performance under different choices of $p_{\text{drop}}$ are less robust compared to those of Nested Dropout.
However, what is in common is that the combination with Co-teaching again brings significant performance boost for all the models.

\begin{table*}[t]
    \centering
    \renewcommand{\arraystretch}{1.1}
    \caption{Average test accuracy (\%) with standard deviation (3 runs) of 
    (a) different $\sigma_{\text{nest}}$ for Nested Dropout, $p_{\text{drop}}$ for Dropout. The corresponding optimal number of channels $k^*$ for each Nested Dropout model is given (entry ``$k^*$"). We report test accuracy of single model (entry ``Acc.") as well as the accuracy with the combination of Co-teaching (entry ``Co-teaching Acc.")
    (b) different $\lambda_{\text{forget}}$ for Co-teaching on ANIMAL-10N~\cite{song2019selfie}.
    }
    \begin{minipage}{0.55\textwidth}
        \centering
        \scalebox{0.9}{\subfloat[Ablation study on $\sigma_{\text{nest}}$ and $p_{\text{drop}}$.]{
        \begin{tabular}{cccccc}
            \hline\hline
            \multicolumn{2}{c|}{Methods} & $k^*$  
            & \multicolumn{1}{c|}{Acc. (\%)}  & $k^*$  & Co-teaching Acc. (\%)  \\ \hline
            \multicolumn{2}{c|}{Cross-Entropy} & 4096  
            & \multicolumn{1}{c|}{79.4 $\pm$ 0.1} & 4096 & 82.2 $\pm$ 1.1 
            \\ \hline
            \multicolumn{1}{c|}{\multirow{5}{*}{$\sigma_{\text{nest}}$}} 
            & \multicolumn{1}{c|}{25} & 17.7 $\pm$ 9.7 
            & \multicolumn{1}{c|}{81.0 $\pm$ 0.6} & 16.3 $\pm$ 6.9 & 83.7 $\pm$ 0.1 
            \\ 
            \multicolumn{1}{c|}{} & \multicolumn{1}{c|}{50}  & 18.8 $\pm$ 6.9 & \multicolumn{1}{c|}{\bf 81.3 $\pm$ 0.6} & 13.4 $\pm$ 4.1   & 84.1 $\pm$ 0.2 
            \\ 
            \multicolumn{1}{c|}{} & \multicolumn{1}{c|}{100} & 13.6 $\pm$ 5.6 & \multicolumn{1}{c|}{81.0 $\pm$ 0.5} & 16.8 $\pm$ 7.1   & \bf 84.1 $\pm$ 0.1 
            \\ 
            \multicolumn{1}{c|}{} & \multicolumn{1}{c|}{150} & 16.0 $\pm$ 3.6 & \multicolumn{1}{c|}{81.1 $\pm$ 0.5} & 18.8 $\pm$ 7.4   & 83.8 $\pm$ 0.2 
            \\ 
            \multicolumn{1}{c|}{} & \multicolumn{1}{c|}{250} & 13.2 $\pm$ 3.1 & \multicolumn{1}{c|}{81.1 $\pm$ 0.2} & 21.0 $\pm$ 10.4  & 83.8 $\pm$ 0.1 
            \\ \hline
            &   & \multicolumn{2}{c}{Acc. (\%)}                        
            & \multicolumn{2}{c}{Co-teaching Acc. (\%)} 
            \\ 
            \multicolumn{1}{c|}{\multirow{3}{*}{$p_{\text{drop}}$}} 
            & \multicolumn{1}{c|}{0.1} 
            & \multicolumn{2}{c|}{\bf 81.6 $\pm$ 0.2} 
            & \multicolumn{2}{c}{\bf 84.5 $\pm$ 0.1} 
            \\ 
            \multicolumn{1}{c|}{} & \multicolumn{1}{c|}{0.3} & \multicolumn{2}{c|}{80.8 $\pm$ 0.4}
            & \multicolumn{2}{c}{84.0 $\pm$ 0.2} 
            \\ 
            \multicolumn{1}{c|}{} & \multicolumn{1}{c|}{0.5} & \multicolumn{2}{c|}{81.1 $\pm$ 0.8} & \multicolumn{2}{c}{84.4 $\pm$ 0.2} 
            \\ \hline\hline
        \end{tabular}} 
        \label{tab::animal10}}
    \end{minipage}
    \hfill
    \begin{minipage}{0.44\textwidth}
        \centering
        \scalebox{0.9}{\subfloat[Ablation study on $\lambda_{\text{forget}}$.]{
        \begin{tabular}{c|cccc}
            \hline\hline
            $\lambda_{\text{forget}}$  
            & 0.1 & 0.2 & 0.3 & 0.5 
            \\ \hline
            \multirow{4}{*}{Acc (\%)} 
            & \multicolumn{4}{c}{Nested$+$Co-teaching} 
            \\  
            & 84.1 {$\pm$} 0.1 & \bf 84.1 {$\pm$} 0.1 & 83.3 {$\pm$} 0.2 & 83.3 {$\pm$} 0.2 
            \\ \cline{2-5} 
            & \multicolumn{4}{c}{Dropout$+$Co-teaching} 
            \\  
            & 84.4 {$\pm$} 0.1 & \bf 84.5 {$\pm$} 0.1 & 84.0 {$\pm$} 0.1 & 83.4 {$\pm$} 0.3  
            \\ \hline\hline
        \end{tabular}} 
        \label{tab::ablation}}
    \end{minipage}\hfill
    \vspace{-5mm}
\end{table*}

\begin{table}[t]
    \caption{Test accuracy (\%) on Clothing1M~\cite{xiao2015learning} with different backbones: ResNet-18, ResNet-50, EfficientNet-B2~\cite{tan2019efficientnet}.
    Results with ``*" use a balanced subset or a balanced loss.}
    \label{tab::clothing1m_backbone}  
    \centering
    \resizebox{0.95\columnwidth}{!}{
    \begin{tabular}{cccc}
        \hline\hline
        \multicolumn{1}{c|}{\multirow{2}{*}{Method / Acc. (\%)}} 
        & \multicolumn{3}{c}{Backbones}           
        \\
        \multicolumn{1}{c|}{} 
        & ResNet-18 & ResNet-50 & EfficientNet-B2 
        \\ \hline
        \multicolumn{1}{c|}{Cross-Entropy}                       
        & 67.2~\cite{wei2020combating} & 69.2~\cite{li2020dividemix} & 69.8 
        \\
        \multicolumn{1}{c|}{DivideMix*~\cite{li2020dividemix}}  
        & --        & 74.8      & --              
        \\ \hline
        \multicolumn{4}{c}{\bf Ours} 
        \\
        \multicolumn{1}{c|}{Nested*}                             
        & 73.1      & 73.2      & 72.5            
        \\
        \multicolumn{1}{c|}{Nested+Co-teaching*}                 
        & \bf 74.9  & \bf 75.0  & \bf 73.5            
        \\
        \multicolumn{1}{c|}{Dropout*}                             
        & 72.5      & 72.9      & 72.5            
        \\
        \multicolumn{1}{c|}{Dropout+Co-teaching*}                
        & 74.0      & 74.0      & 73.4            
        \\ \hline\hline
    \end{tabular}}
\end{table}

\subsubsection{Ablation on $\lambda_{\text{forget}}$}
We focus on how forget rate $\lambda_{\text{forget}}$ of Co-teaching influences the performance in Table~\ref{tab::ablation} (b).
We present with the settings where Nested Dropout and Dropout have the best performance.
To be specific, we set $\sigma_{\text{nest}}=100$ and $p_{\text{drop}}=0.1$ for the training, respectively.
For Nested$+$Co-teaching, the performance drops with $\lambda_{\text{forget}}$ increasing, and the accuracy remains $83.3\%$ for $\lambda_{\text{forget}}\geq 0.5$.
The performance with $\lambda_{\text{forget}}\geq 0.5$ are not given in the Table~\ref{tab::ablation} (b) for simplicity.
Moreover, this performance boost of $83.3\%$ compared to a single Nested Dropout network actually results from the ensemble estimation, not the cross-update mechanism of Co-teaching.
Therefore, considering that the estimated label noise ratio of ANIMAL-10N is $8\%$, Co-teaching's core mechanism is not suitable for cases where the difference between $\lambda_{\text{forget}}$ and the ground-truth noise ratio is large.
Furthermore, performance of Dropout$+$Co-teaching again verify the above analysis where the best performance is achieved by $\lambda_{\text{forget}}=0.2$.

\subsubsection{Ablation on different backbones}
As given in Table~\ref{tab::clothing1m_backbone}, the best performance is achieved by using ResNet-50 as backbone, which is 75.0\% in accuracy.
Even though EfficientNet-B2~\cite{tan2019efficientnet} cannot achieve the best performance possibly due to its capacity issue, Nested$+$Co-teaching indeed improves upon a single Nested Dropout model.
The same applied to Dropout$+$Co-teaching.
These results suggest the effectiveness of our approach with different backbones.

\section{Conclusion} 
\label{sec::conclusion}
In this paper, we investigate the problem of image classification in the presence of label noise.
In particular, we find that preventing networks from over-fitting the corrupted labels is one key problem in learning with noisy labels based on a bias-variance decomposition.
To this end, we introduce compression inductive bias to networks to increase the variance term so as to weaken the influence of the bias term which is associated with over-fitting.
This inductive bias is realized by applying simple compression regularizations such as Dropout \cite{srivastava2014dropout} and its variant named Nested Dropout \cite{rippel2014learning} to networks.
Notably, Nested Dropout is proved to learn ordered feature representations in this paper.
Therefore, this information sorting property can bring interpretability w.r.t.~channel importance to networks while filter out the noisy patterns.
Moreover, we combine these compression regularizations with Co-teaching \cite{han2018co}, leading to a two-stage method.
We then theoretically verify that this combination is in line with our bias-variance trade-off since Co-teaching further increases the variance term, hence further prevent networks from over-fitting.
Our method is validated on benchmark real datasets under synthetic label noise and real-world label noise including Clothing1M \cite{xiao2015learning} and ANIMAL-10N \cite{song2019selfie}.
Our method achieves comparable or even better performance than state-of-the-art approaches.
Our approach is simple compared to many existing methods.  
Therefore, we hope that our approach can serve as a strong baseline for future research on learning with noisy labels.

\section*{Acknowledgements}
We appreciate Qinghua Tao for helpful discussions, and also the anonymous reviewers for their insightful comments.

\bibliographystyle{IEEEtran}
\bibliography{egbib}

\section*{Appendix} \label{sec::app}

This appendix provides the proofs of this paper.
\begin{proof} [Proof of Theorem \ref{thm::decomp}]
Since $H(Y|X, \varepsilon)$ is a constant with respect to the model distribution $q(y|x)$, for a given input $x$, we have
\begin{align} \label{eq::decomp}
    & \mathbb{E}_{p(y|x)}\big[ L_q(x,Y) \big]
    = \mathbb{E}_{p(y|x)} \mathbb{E}_{q(z|x)} \big[ -\log q(y|z)\big]
    \nonumber\\
    & = \mathbb{E}_{p(\varepsilon)} \mathbb{E}_{p(y|x, \varepsilon)}
    \mathbb{E}_{q(z|x)} \big[ -\log q(y|z)\big]
    \nonumber\\
    & = \mathbb{E}_{p(\varepsilon)} \mathbb{E}_{p(y|x, \varepsilon)}
    \mathbb{E}_{q(z|x)} \bigg[ \log \frac{p(y|x, \varepsilon)}{q(y|z)}
    - \log p(y|x, \varepsilon) \bigg]
    \nonumber\\
    & = \mathbb{E}_{p(\varepsilon)} \mathbb{E}_{q(z|x)} 
    D_{\text{KL}} \big(p(y|x, \varepsilon) \| q(y|z) \big)
    + H(Y|X=x, \varepsilon).
\end{align}
The following result from Theorem 3.1 in \cite{Brofos2019ABD} is in need.
\begin{lemma} \label{lm::decomp}
Let $\theta \sim \pi$ and $\omega \sim \pi'$ and let $f_{\theta}$ be a distribution with the same supports as a random probability distribution $P_{\omega}$ for all $\theta$ and $\omega$. 
Then if $Y\sim \bar{P}:=\mathbb{E}_{\omega} P_{\omega}$,
\begin{align*}
    \mathbb{E}_{\omega}\mathbb{E}_{\theta} D_{\text{KL}} (P_{\omega} \| f_{\theta})
    = D_{\text{KL}}(\bar{P} \| \tilde{P}) + \mathbb{E}_{\theta} D_{\text{KL}}(\tilde{P} \| f_{\theta}) + I(Y; \omega)
\end{align*}
where $\tilde{P}(\cdot) \propto \exp \big[ \mathbb{E}_{\theta} \log f_{\theta} (\cdot)\big]$.
\end{lemma}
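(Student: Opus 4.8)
The plan is to establish the identity by the classical add-and-subtract (telescoping) device for $D_{\text{KL}}$, which reduces the whole statement to three cross-terms, two of which are read off immediately and one of which absorbs the entire content of the lemma. The first thing I would do is pin down the normalizing constant of $\tilde{P}$. Writing $C := \int \exp\!\big[\mathbb{E}_\theta \log f_\theta(y)\big]\,dy$, the definition $\tilde{P}(y) \propto \exp[\mathbb{E}_\theta \log f_\theta(y)]$ becomes the exact relation $\log \tilde{P}(y) = \mathbb{E}_\theta[\log f_\theta(y)] - \log C$. This is the \emph{only} place the specific form of $\tilde{P}$ is used, and I expect it to carry the weight of the proof.

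Next I would expand the integrand of the left-hand side through the telescoping factorization
\[
\log\frac{P_\omega(y)}{f_\theta(y)}
= \log\frac{P_\omega(y)}{\bar{P}(y)}
+ \log\frac{\bar{P}(y)}{\tilde{P}(y)}
+ \log\frac{\tilde{P}(y)}{f_\theta(y)},
\]
integrate against $P_\omega(y)\,dy$, and then apply $\mathbb{E}_\omega\mathbb{E}_\theta$. The first two pieces have integrands with no dependence on $\theta$, so the inner $\theta$-average is inert; using $\mathbb{E}_\omega P_\omega = \bar{P}$ they collapse to $\mathbb{E}_\omega D_{\text{KL}}(P_\omega \| \bar{P})$ and $D_{\text{KL}}(\bar{P}\|\tilde{P})$ respectively. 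The second is already the claimed bias term. For the first I would invoke the standard identity $I(Y;\omega) = \mathbb{E}_{\pi'(\omega)} D_{\text{KL}}(P_\omega \| \bar{P})$ (expected KL from conditional to marginal, valid precisely because $Y\mid\omega \sim P_\omega$ while $Y\sim\bar{P}$), which identifies it with the mutual-information term.

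The crux is the third term $\mathbb{E}_\omega\mathbb{E}_\theta \int P_\omega \log(\tilde{P}/f_\theta)$, which must be matched to $\mathbb{E}_\theta D_{\text{KL}}(\tilde{P}\|f_\theta)$. Averaging over $\omega$ first replaces $P_\omega$ by $\bar{P}$, so it suffices to show $\mathbb{E}_\theta \int (\bar{P}-\tilde{P})\,\log(\tilde{P}/f_\theta) = 0$. Here the normalization step pays off: by the relation above, $\mathbb{E}_\theta \log\big(\tilde{P}(y)/f_\theta(y)\big) = \log\tilde{P}(y) - \mathbb{E}_\theta\log f_\theta(y) = -\log C$ is \emph{constant} in $y$, so its integral against the signed measure $\bar{P}-\tilde{P}$ vanishes because both $\bar{P}$ and $\tilde{P}$ are normalized to one. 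This substitution $\bar{P}\mapsto\tilde{P}$ inside the cross-term is the single nontrivial move, and it is exactly the assertion that $\tilde{P}$ is the correctly normalized geometric (log-)average of the family $\{f_\theta\}$; I expect this to be the main obstacle, in the sense that it is where one could go wrong by forgetting the constant $C$.

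I would close by stating the mild hypotheses that license the manipulations: the common-support assumption guarantees each $\log(\cdot/\cdot)$ and each $D_{\text{KL}}$ is well defined and finite; finiteness of $C$ ensures $\tilde{P}$ is a genuine density; and Fubini--Tonelli justifies interchanging the $\omega$-, $\theta$-, and $y$-integrations in all three terms. With these in hand, summing the three identified pieces reproduces $D_{\text{KL}}(\bar{P}\|\tilde{P}) + \mathbb{E}_\theta D_{\text{KL}}(\tilde{P}\|f_\theta) + I(Y;\omega)$ exactly, which completes the argument.
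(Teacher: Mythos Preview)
Your argument is correct: the telescoping $\log(P_\omega/f_\theta)=\log(P_\omega/\bar P)+\log(\bar P/\tilde P)+\log(\tilde P/f_\theta)$ together with the observation that $\mathbb{E}_\theta\log(\tilde P/f_\theta)=-\log C$ is constant in $y$ (so that the $\bar P\mapsto\tilde P$ swap in the third term is free) cleanly yields the three pieces $I(Y;\omega)$, $D_{\text{KL}}(\bar P\|\tilde P)$, and $\mathbb{E}_\theta D_{\text{KL}}(\tilde P\|f_\theta)$. Note, however, that the paper does not actually prove this lemma at all---it is simply quoted as Theorem~3.1 of \cite{Brofos2019ABD} and then applied, so your self-contained derivation goes beyond what the paper itself provides.
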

Now by applying Lemma \ref{lm::decomp}, we find that the first term in \eqref{eq::decomp} can be decomposed as
\begin{align*}
    \mathbb{E}_{p(\varepsilon)} \mathbb{E}_{q(z|x)} 
    & D_{\text{KL}} \big(p(y|x, \varepsilon) \| q(y|z) \big)
    = D_{\text{KL}} \big(p(y|x) \| \tilde{q}(y|x) \big) 
    \\
    & + \mathbb{E}_{q(z|x)} D_{\text{KL}} \big(\tilde{q}(y|x) \| q(y|z) \big)
    + I(Y; \varepsilon)
\end{align*}
where we have $p(y|x) = \mathbb{E}_{p(\varepsilon)}p(y|x,\varepsilon)$, and also $\tilde{q}(y|x) = \exp [ \mathbb{E}_{q(z|x)} \log q(y|z)]/\int_{\mathcal{Y}} \exp [ \mathbb{E}_{q(z|x)} \log q(y|z)] dy$. 
Note the proof follows if we plug in the above decomposition when computing $\mathbb{E}_{p(x,y)}\big[ L_q(X,Y)\big]$.
Note that both $H(Y|X,\varepsilon)$ and $I(Y;\varepsilon)$ are constant with respect to the model $q(y|x)$.
This completes the proof.
\end{proof}

\begin{proof} [Proof of Theorem \ref{thm::ranking}]
We split the proof into two parts.

    \textbf{\textit{(i)}} [Proof of \eqref{eq::IYH}]:
    For the hidden representation $\tilde{Z}$, since we have \eqref{eq::permu} held, $\forall\, i,j \in \{1,\ldots,K\}$, and $i<j$ without loss of generality, we exchange the $i$-th and $j$-th arguments:
    \begin{align*}
        p_{\tilde{Z}_i}(\tilde{z}_i) 
        & =\int\cdots\int p_{\tilde{Z}_1,\tilde{Z}_2,\ldots,\tilde{Z}_K}
        (\tilde{z}_1,\ldots,\tilde{z}_i,\ldots,\tilde{z}_j,\ldots,\tilde{z}_K) 
        \\ 
        & \quad
        d\tilde{z}_1\cdots d\tilde{z}_{i-1}d\tilde{z}_{i+1}\cdots d\tilde{z}_K
        \\
        &=\int\cdots\int p_{\tilde{Z}_1,\tilde{Z}_2,\ldots,\tilde{Z}_K}
        (\tilde{z}_1,\ldots,\tilde{z}_j,\ldots,\tilde{z}_i,\ldots,\tilde{z}_K) 
        \\
        & \quad
        d\tilde{z}_1\cdots d\tilde{z}_{i-1}d\tilde{z}_{i+1}\cdots d\tilde{z}_K
        =p_{\tilde{Z}_j}(\tilde{z}_i).
    \end{align*}
    Therefore, $p_{\tilde{Z}_i}(\tilde{z}_i) = p_{\tilde{Z}_j}(\tilde{z}_i)$ and $\tilde{Z}_i$ and $\tilde{Z}_j$ are identically distributed.
    Since $i$, $j$ are arbitrarily chosen, we have $\{\tilde{Z}_i\}_{i=1}^K$ identically distributed.
    
    Similarly, for the joint distributions of $Y, \tilde{Z}_i$ and $Y, \tilde{Z}_j$ with $i$, $j$ arbitrarily chosen, we have
    \vspace{-1mm}
    \begin{align*}
    &p_{Y,\tilde{Z}_i}(y,\tilde{z}_i) 
        \\
        &= \int\cdots\int p_{Y,\tilde{Z}_1,\tilde{Z}_2,\ldots,\tilde{Z}_K}(y,\tilde{z}_1,\ldots, \tilde{z}_i,\ldots,\tilde{z}_j,\ldots,\tilde{z}_K)
        \\
        & \quad
        d\tilde{z}_1\cdots d\tilde{z}_{i-1}d\tilde{z}_{i+1}\cdots d\tilde{z}_K
        \\
        &= \int\cdots\int p_{Y|\tilde{Z}_1,\tilde{Z}_2,\ldots,\tilde{Z}_K}
        (y|\ldots,\tilde{z}_i,\ldots,\tilde{z}_j,\ldots) 
        \\
        & p_{\tilde{Z}_1,\tilde{Z}_2,\ldots,\tilde{Z}_K}
        (\ldots,\tilde{z}_i,\ldots,\tilde{z}_j,\ldots) 
        d\tilde{z}_1\cdots d\tilde{z}_{i-1}d\tilde{z}_{i+1}\cdots d\tilde{z}_K
        \\
        &= \int\cdots\int p_{Y|\tilde{Z}_1,\tilde{Z}_2,\ldots,\tilde{Z}_K}
        (y|\ldots,\tilde{z}_j,\ldots,\tilde{z}_i,\ldots)
        \\
        & p_{\tilde{Z}_1,\tilde{Z}_2,\ldots,\tilde{Z}_K}
        (\ldots,\tilde{z}_j,\ldots,\tilde{z}_i,\ldots) 
        d\tilde{z}_1\cdots d\tilde{z}_{i-1}d\tilde{z}_{i+1}\cdots d\tilde{z}_K
        \\
        &= \int\cdots\int p_{Y,\tilde{Z}_1,\tilde{Z}_2,\ldots,\tilde{Z}_K}(y,\tilde{z}_1,\ldots, \tilde{z}_j,\ldots,\tilde{z}_i,\ldots,\tilde{z}_K)
        \\
        & \quad
        d\tilde{z}_1\cdots d\tilde{z}_{i-1}d\tilde{z}_{i+1}\cdots d\tilde{z}_K
        = p_{Y,\tilde{Z}_j}(y,\tilde{z}_i),
    \end{align*}
    where the third equivalence holds for the permutation invariant \eqref{eq::permu},\eqref{eq::dec_permu}.
    Hence, $p_{Y,\tilde{Z}_i}(y,\tilde{z}_i) = p_{Y,\tilde{Z}_j}(y,\tilde{z}_i)$ for arbitrary $i$, $j$, that is, $\{(Y,\tilde{Z}_i)\}_{i=1}^K$ are identically distributed.

    Considering the formulation of mutual information, we have $\forall\, i,j \in \{1,\ldots,K\}$:
    \begin{align*}
        I(Y; \tilde{Z}_i) :&= \int_{\mathcal{Y}} \int_{\mathcal{\tilde{Z}}_i} p_{Y,\tilde{Z}_i}(y,\tilde{z}_i) \log \frac{p_{Y,\tilde{Z}_i}(y,\tilde{z}_i)}{p_{Y}(y) p_{\tilde{Z}_i}(\tilde{z}_i)} \, dy d\tilde{z}_i
        \\
        & = \int_{\mathcal{Y}} \int_{\mathcal{\tilde{Z}}_j} p_{Y,\tilde{Z}_j}(y,\tilde{z}_j) \log \frac{p_{Y,\tilde{Z}_j}(y,\tilde{z}_j)}{p_{Y}(y) p_{\tilde{Z}_j}(\tilde{z}_j)} \, dy d\tilde{z}_j
        \\
        &=: I(Y; \tilde{Z}_j).
    \end{align*}
    This completes the proof of \eqref{eq::IYH}.

\textbf{\textit{(ii)}} [Proof of \eqref{eq::IYZ}]:
By the definition of \textit{Nested Dropout} in \eqref{eq::mask_nested} and \eqref{eq::CatGaussian}, we have
\vspace{-1mm}
\begin{align*}
    Z=[\tilde{Z}_{1:k}, 0,\ldots,0] \,\, \text{with} \,\, 
    k \sim \mathcal{C}(p_1,\ldots, p_K), 
\end{align*}
\vspace{-1mm}
We first rewrite the above calculation equivalently as 
\begin{align} \label{eq::net_struct}
    \begin{split} 
    &T_k^1 = \tilde{Z}_k, \quad \forall \, k=1, \ldots, K 
    \\
    &T_{k}^{i+1} = 
    \begin{cases}
    T_{k}^i\, , & k \leq i 
    \\
    \varepsilon_i \, T_{k}^i \, , & k > i 
    \end{cases} \, , \\
    &\forall\, i = 1, \ldots, K-1, \quad \forall\, k=1, \ldots, K
    \\
    &\text{where} \, 
    \varepsilon_i = 
    \begin{cases}
    b \sim \textit{Bernoulli}\,(\eta_i)\, , & \varepsilon_{i-1} = 1\, ,
    \\
    0\, , & \text{o.w.}
    \end{cases} \\
    & \text{and}\,\, \varepsilon_0 = 1, \quad
    Z_k = T_k^k, \quad \forall \, k=1, \ldots, K.
    \end{split}
\end{align}
The above recursion can be solved and specified by 
\vspace{-1mm}
\begin{align*}
    Z_1 = \tilde{Z}_1, \quad
    Z_k = \Big( \prod_{j=1}^{k-1} \varepsilon_j \Big) \tilde{Z}_k, \quad \forall k=2, \ldots, K,
\end{align*}
which suggests that
\begin{align*}
    \mathrm{P}( Z=[\tilde{Z}_{1}, 0,\ldots,0] ) 
    & = \mathrm{P}(Z_1 = \tilde{Z}_1, Z_{2}=0) \\
    & = \mathrm{P}( \varepsilon_1 = 0) 
    = 1 - \eta_1,
\end{align*}
and for $k = 2, \ldots, K-1$, we have
\begin{align*}
    & \mathrm{P}( Z=[\tilde{Z}_{1:k}, 0,\ldots,0] ) 
    = \mathrm{P}(Z_k = \tilde{Z}_k, Z_{k+1}=0) \\
    & = \mathrm{P}( \varepsilon_1 = 1, \ldots, \varepsilon_{k-1} = 1, \varepsilon_{k} = 0)  
    = \Big(\prod_{j=1}^{k-1} \eta_j \Big) (1-\eta_{k}),
\end{align*}
and also have
\begin{align*}
    \mathrm{P}( Z=\tilde{Z} ) & = \mathrm{P}(Z_{K-1} = \tilde{Z}_{K-1}, Z_{K}=\tilde{Z}_K) \\
    & = \mathrm{P}( \varepsilon_1 = 1, \ldots, \varepsilon_{K-1} = 1) 
     = \prod_{j=1}^{K-1} \eta_j.
\end{align*}
Hence, the equivalence holds if
\begin{align*}
    p_1 \equiv 1 - \eta_1, \quad
    p_k \equiv \Big(\prod_{j=1}^{k-1} \eta_j \Big) (1-\eta_{k}) \\ \text{for}\,\,\, k = 2, \ldots, K-1,\,\,\, \text{and} \,\,\,
    p_K \equiv \prod_{j=1}^{K-1} \eta_j.
\end{align*}
More precisely, the equivalence holds if
\begin{align*}
    \eta_1 = 1 - p_1, \quad
    \eta_k = \frac{1 - \sum_{j=1}^k p_j}{1 - \sum_{j=1}^{k-1} p_j} \,\,\, \text{for}\,\,\, k = 2, \ldots, K-1.
\end{align*}
Given the construction of the network, we obtain a Markov chain $Y \rightarrow X \rightarrow T_k^1 \rightarrow \cdots \rightarrow T_k^K$, which simply means that $T_k^i$ is conditionally independent of $Y$ given $X$ or $T_k^{j}$ for $j < i$.
Now, we need the following data processing inequality:
\begin{lemma}[Data processing inequality]
Let three random variables form the Markov chain $A \rightarrow B \rightarrow C$, meaning that $C$ is conditionally independent of $A$ given $B$. Then,
$$
I(A; B) \geq I(A; C).
$$
The equality holds if and only if $I(A; B \,|\, C) = 0$.
\end{lemma}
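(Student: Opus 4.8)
The plan is to prove the data processing inequality through the chain rule for mutual information, which expands the joint quantity $I(A; B, C)$ in two symmetric ways. First I would write
\begin{align*}
I(A; B, C) = I(A; C) + I(A; B \mid C) = I(A; B) + I(A; C \mid B).
\end{align*}
Both equalities are instances of the chain rule for mutual information, and equating the two right-hand sides is the heart of the argument. This reduces the entire claim to controlling the two conditional mutual information terms that appear.

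Next I would invoke the Markov hypothesis $A \rightarrow B \rightarrow C$. By the very definition stated in the lemma, $C$ is conditionally independent of $A$ given $B$, and conditional independence is equivalent to the vanishing of the corresponding conditional mutual information, so $I(A; C \mid B) = 0$. Substituting this into the second decomposition collapses it to
\begin{align*}
I(A; B) = I(A; C) + I(A; B \mid C).
\end{align*}

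Finally, I would use that conditional mutual information is always non-negative, since $I(A; B \mid C)$ is an expectation over $C$ of Kullback--Leibler divergences and each such divergence is non-negative. Hence $I(A; B \mid C) \geq 0$, which immediately gives $I(A; B) \geq I(A; C)$. The equality characterization falls out of the same identity with no extra work: equality holds if and only if the leftover term $I(A; B \mid C)$ is exactly zero, which is precisely the stated condition.

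The main obstacle here, such as it is, is not in the inequality itself but in taking for granted the two supporting facts: the chain rule for mutual information and the non-negativity of conditional mutual information. Both are standard results in information theory and were effectively introduced alongside the definitions in the preliminaries, so once they are invoked the proof is a two-line computation. I therefore expect no genuine technical difficulty; the argument is essentially a bookkeeping exercise built on the chain rule and the Markov assumption.
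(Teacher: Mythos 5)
Your proof is correct. The two chain-rule expansions of $I(A;B,C)$, the vanishing of $I(A;C\mid B)$ under the Markov assumption, and the non-negativity of conditional mutual information (as an expectation of Kullback--Leibler divergences) give exactly the inequality and the stated equality criterion. Note, however, that the paper itself never proves this lemma: it is invoked inside the proof of Theorem~\ref{thm::ranking} as a standard information-theoretic fact, so there is no paper argument to compare against --- your write-up supplies the canonical textbook (Cover--Thomas) proof of a step the paper leaves to the reader. One small caveat: you say the chain rule was ``effectively introduced'' in the preliminaries, but the paper only defines mutual and conditional mutual information and never states the chain rule; if this proof were to be inserted, you should either cite the chain rule explicitly or derive it in one line from those definitions, which is straightforward.
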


Combining the above data processing inequality with both \eqref{eq::IYH} and \eqref{eq::net_struct}, we have
\begin{align*}
    I(Y; Z_{k+1}) = I(Y; T_{k+1}^{k+1}) 
    & \leq I(Y; T_{k+1}^k) \\
    & = I(Y; T_k^k) = I(Y; Z_k).
\end{align*}
Note that $I(Y; T_{k+1}^k) = I(Y; T_k^k)$ is due to the fact that
\begin{align*}
    &T_{k+1}^k = \prod_{j=1}^{k-1} \varepsilon_j \tilde{Z}_{k+1} = \begin{cases}
    \tilde{Z}_{k+1}, & \prod_{j=1}^{k-1} \varepsilon_j = 1\, ,\\
    0, & \text{o.w.}
    \end{cases} \\
    &T_{k}^k = \prod_{j=1}^{k-1} \varepsilon_j \tilde{Z}_{k} = 
    \begin{cases}
    \tilde{Z}_{k}, & \prod_{j=1}^{k-1} \varepsilon_j = 1\, ,\\
    0, & \text{o.w.}
    \end{cases}
\end{align*}
and $I(Y; \tilde{Z}_{k+1}) = I(Y; \tilde{Z}_k)$.
This completes the proof.
\end{proof}

\begin{proof} [Proof of Theorem \ref{thm::co_decomp}]
    Recall that the taught student decoder is defined by
    $q_{\text{co}}(y|x,z) = \exp[q_t(y|x) \log q(y|z)]/C_1(x,z)$
    where $C_1(x,z):=\int_{\mathcal{Y}}\exp[q_t(y|x) \log q(y|z)]\,dy$.
    For a given input $x$,
    \begin{align} \label{eq::decomp_co}
        & \mathbb{E}_{p(y|x)}[L_q^s(x,Y)]
        = \mathbb{E}_{p(y|x)} \mathbb{E}_{q(z|x)} \big[ 
        - q_t(y|x) \log q(y|z)\big]
        \nonumber\\
        & = \mathbb{E}_{p(y|x)} \mathbb{E}_{q(z|x)} 
        \big[-\log q_{\text{co}}(y|x,z) \big] - \mathbb{E}_{q(z|x)}[\log C_1(x,z)]
        \nonumber\\
        & = \mathbb{E}_{p(\varepsilon)} \mathbb{E}_{q(z|x)}
        D_{\text{KL}} \big(p(y|x,\varepsilon) \| q_{\text{co}}(y|x,z)\big) + \text{const}
    \end{align}
    where $\text{const}=H(Y|X=x,\varepsilon) - \mathbb{E}_{q(z|x)}[\log C_1(X=x,z)]$, and the last equation is according to \eqref{eq::decomp}.
    With Lemma \ref{lm::decomp} applied, the first term in \eqref{eq::decomp_co} can be decomposed as
    \begin{align*}
        & \mathbb{E}_{p(\varepsilon)} \mathbb{E}_{q(z|x)} 
        D_{\text{KL}} \big(p(y|x,\varepsilon) \| q_{\text{co}}(y|x,z)\big)
        \\
        & = D_{\text{KL}} \big( p(y|x) \| \tilde{q}_{\text{co}}(y|x)\big)
        \\
        & \quad \quad \quad + \mathbb{E}_{q(z|x)} D_{\text{KL}}
        \big(\tilde{q}_{\text{co}}(y|x) \| q_{\text{co}}(y|x,z) \big)
        + I(Y;\varepsilon)
    \end{align*}
    where we have $p(y|x)=\mathbb{E}_{p(\varepsilon)}p(y|x,\varepsilon)$, and also 
    \begin{align*}
        &\tilde{q}_{\text{co}}(y|x) \propto \exp \big[\mathbb{E}_{q(z|x)} \log q_{\text{co}}(y|x,z) \big]
        \\
        & \propto \exp \big[\mathbb{E}_{q(z|x)}q_t(y|x) \log q(y|z) \big]
        \\
        & = \exp \big[ q_t(y|x) \mathbb{E}_{q(z|x)}\log q(y|z)\big]
        \\
        & \propto \exp[ q_t(y|x)] \tilde{q}(y|x).
    \end{align*}
    Since $\tilde{q}_{\text{co}}(y|x) \propto \exp[ q_t(y|x)] \tilde{q}(y|x)$, then we have $\tilde{q}_{\text{co}}(y|x)$ follows
    $\tilde{q}_{\text{co}}(y|x)=\exp[ q_t(y|x)] \tilde{q}(y|x)/C_2(x)$ with $C_2(x):= \int_{\mathcal{Y}}\exp[ q_t(y|x)] \tilde{q}(y|x)\,dy$.
    The proof follows if we plug in the above decomposition when computing $\mathbb{E}_{p(x,y)}\big[L_q^s(X,Y)\big]$.
    Note that both $H(Y|X,\varepsilon)$ and $I(Y;\varepsilon)$ are constant with respect to the model $q(y|x)$. 
    
    \textit{(i)} For the bias term, we have
    \begin{align*} 
        & D_{\text{KL}} \big( p(y|x) \| \tilde{q}_{\text{co}}(y|x)\big)
        = \int_\mathcal{Y} p(y|x) \log \frac{p(y|x)}{\tilde{q}_{\text{co}}(y|x)} \, dy
        \nonumber \\
        & = \int_\mathcal{Y} p(y|x) \log \frac{C_2(x)\cdot p(y|x)}{\exp[ q_t(y|x)] \tilde{q}(y|x)}\, dy
        \nonumber \\
        & = \int_\mathcal{Y} p(y|x) \log \Big[ \alpha(y|x)\cdot\frac{ p(y|x)}{\tilde{q}(y|x)} \Big]\, dy.
    \end{align*}
    where $\alpha(y|x): = \mathbb{E}_{\tilde{q}(y|x)}\big[\exp[ q_t(y|x)]\big] / \exp[ q_t(y|x)]$.
    For $\alpha(y|x) \leq 1$, that is, $q_t(y|x) \geq \log\big[\mathbb{E}_{\tilde{q}(y|x)}\exp[ q_t(y|x)]\big]$, we have Co-teaching bias term satisfying 
    \begin{align*}
         D_{\text{KL}} \big( p(y|x) \| \tilde{q}_{\text{co}}(y|x)\big)
         \leq D_{\text{KL}} \big( p(y|x) \| \tilde{q}(y|x)\big).
    \end{align*}
    
    \textit{(ii)} For the variance term, we have
    \begin{align*}
        & \mathbb{E}_{q(z|x)} D_{\text{KL}}
        \big(\tilde{q}_{\text{co}}(y|x) \| q_{\text{co}}(y|x,z) \big)
        \\
        & = \mathbb{E}_{q(z|x)} \int_{\mathcal{Y}} \tilde{q}_{\text{co}}(y|x) \log \frac{\tilde{q}_{\text{co}}(y|x)}{q_{\text{co}}(y|x,z)}\, dy
        \\
        & = \mathbb{E}_{q(z|x)} \int_{\mathcal{Y}} \frac{\tilde{q}(y|x)}{\alpha(y|x)} \log \frac{\tilde{q}(y|x) /\alpha(y|x)}{\exp[ q_t(y|x) \log q(y|z)]/C_1(x,z)}\, dy
        \\
        & \geq \mathbb{E}_{q(z|x)} \int_{\mathcal{Y}} \frac{\tilde{q}(y|x)}{\alpha(y|x)} \log \Big[
        \frac{C_1(x,z)}{\alpha(y|x)}
        \frac{\tilde{q}(y|x)}{q(y|z)} \Big]\, dy
    \end{align*}
    where the last inequality is due to $\exp[ q_t(y|x) \log q(y|z)] \leq q(y|z)$ as the output of the teacher network is derived by soft-max layer with $0\leq q_t(y|x) \leq 1$. 
    Similarly, $C_1(x,z):=\int_{\mathcal{Y}}\exp[q_t(y|x) \log q(y|z)]\,dy \leq \int_{\mathcal{Y}}\exp[\log q(y|z)]\,dy = 1$.
    In these regards, if we further have $\alpha(y|x) \leq C_1(x,z)$, then
    \begin{align*}
        & \mathbb{E}_{q(z|x)} \int_{\mathcal{Y}} \frac{\tilde{q}(y|x)}{\alpha(y|x)} \log \Big[
        \frac{C_1(x,z)}{\alpha(y|x)} 
        \frac{\tilde{q}(y|x)}{q(y|z)} \Big]\, dy
        \\
        & \geq \mathbb{E}_{q(z|x)} \int_{\mathcal{Y}} \tilde{q}(y|x) \log \frac{\tilde{q}(y|x)}{q(y|z)}\, dy
        \\
        & = \mathbb{E}_{q(z|x)} D_{\text{KL}} \big(\tilde{q}(y|x) \| q(y|z)\big).
    \end{align*}
    To conclude, if $\alpha(y|x) \leq C_1(x,z)$, then 
    \begin{align*}
        \mathbb{E}_{q(z|x)} 
        & D_{\text{KL}} 
        \big(\tilde{q}_{\text{co}}(y|x) \| q_{\text{co}}(y|x,z) \big) 
        \\
        & \geq \mathbb{E}_{q(z|x)} D_{\text{KL}} \big(\tilde{q}(y|x) \| q(y|z)\big).
    \end{align*}
    This completes the proof.
\end{proof}

\end{document}